\documentclass{article} 
\usepackage{iclr2027_conference,times}

\usepackage{amsmath,amsfonts,bm}

\def\eqref#1{equation~\ref{#1}}

\def\1{\bm{1}}

\DeclareMathAlphabet{\mathsfit}{\encodingdefault}{\sfdefault}{m}{sl}
\SetMathAlphabet{\mathsfit}{bold}{\encodingdefault}{\sfdefault}{bx}{n}

\usepackage{hyperref}
\usepackage{url}
\usepackage{amsmath,amssymb,amsthm}
\usepackage{booktabs}
\usepackage{graphicx}
\usepackage{placeins}
\usepackage{needspace}
\usepackage{tikz}
\usetikzlibrary{decorations.pathreplacing,calc}
\usepackage{multirow}
\usepackage{xcolor}
\hypersetup{hidelinks}
\definecolor{figblue}{HTML}{306B9E}
\definecolor{figorange}{HTML}{BE673C}
\definecolor{figgray}{HTML}{AEB7BF}
\definecolor{figink}{HTML}{1A1A19}
\definecolor{figmut}{HTML}{6B6963}

\newcommand{\rhow}{\rho_w}
\newcommand{\rhob}{\rho_b}
\newcommand{\kbar}{\bar{k}}
\newcommand{\neff}{n_{\mathrm{eff}}}
\newcommand{\Hprov}{H_0^{\mathrm{prov}}}

\newtheorem{lemma}{Lemma}
\newtheorem{proposition}{Proposition}
\newtheorem{remark}{Remark}

\title{How Sensitive Are LLM Leaderboard Claims to Hidden Model Selection?}

\author{Chen Yang$^{1}$ \quad Jun Chen$^{2,*}$\\
$^{1}$Department of Statistics, Texas A\&M University\\
$^{2}$Division of Computational Biology,\\
Department of Quantitative Health Sciences, Mayo Clinic\\
$^{*}$Corresponding author: \texttt{Chen.Jun2@mayo.edu}}

\iclrfinalcopy
\begin{document}
\raggedbottom
\setlength{\parskip}{4pt}
\makeatletter
\setlength{\@fptop}{0pt}
\setlength{\@fpsep}{10pt}
\makeatother

\maketitle

\begin{abstract}
LLM leaderboard gains can reflect selection among privately evaluated
model variants, yet neither the number of variants nor their dependence
is public. We ask how many hidden variants a published margin can support
while retaining statistical evidence of a provider's advantage over a
fixed comparator. For a fixed candidate family under a Gaussian margin model, we derive a sensitivity
curve that reports this maximum count as a function of a lower bound on
within-family correlation. The relevant correlation must match the score
used for ranking and the sampling model: in a controlled family,
pooled item correlation is 0.90, whereas composite-score correlation is
0.46 under item resampling and 0.92 when MMLU subjects are resampled. An item-based audit
of 394 adjacent-rank claims on the Open LLM Leaderboard finds that 391
lack statistical support even before accounting for selection. Among
claims that pass the uncorrected test, certification can depend on
assumptions about the hidden family's correlation. The resulting curves
make these assumptions explicit without estimating the unobserved search size.
\end{abstract}

\section{Introduction}
\label{sec:intro}

Public leaderboards are the de facto arbiter of progress in large
language models, and their published claims influence deployment, funding,
and research agendas. Their margins rest on an unstated assumption:
that the submitted model was evaluated in isolation.
\citet{singh2025leaderboard} document otherwise: before one release,
Meta tested 27 Llama-4 variants on a battle-style platform and released
the best. We adopt 27 as a stated reference multiplicity (it was
documented on a different platform from the boards we audit). When
the submitted model is the maximum of $k$ privately scored siblings,
its margin is inflated, and marginal error bars
\citep{miller2024adding,bowyer2025position} are too short for the
selected claim. How much too short depends on how alike the siblings
are: $k$ private tries inflate a maximum in proportion to how
independently they vary, and sibling fine-tunes vary far from
independently.

We assess a provider's advantage over a fixed public comparator under
this selection model, not the correctness of the complete ranking.

A natural reaction is a multiplicity correction, but the correction is
underdetermined in two ways. The number of hidden variants $k$ is
unobserved, so no fixed correction applies. The hidden variants are
also not independent: sibling fine-tunes err on largely the same items
\citep{kim2025correlated}, so independence-based corrections are
miscalibrated, while dependence-robust Bonferroni is valid under any
dependence but conservative by an unmeasured amount. A public auditor
sees one submitted score and one comparator; the candidate set that
produced the submission, and its internal correlation, are private.

We therefore invert the question, in the spirit of
\citet{rosenthal1979file}'s file-drawer number: instead of correcting
for an unobserved $k$, we ask \emph{under how much hidden selection a
given claim remains certifiable}. Because the answer depends on the one property
no auditor observes, the hidden family's correlation, the output is a
curve rather than a number. We derive the least-favorable null
distribution of the selected margin under a Gaussian margin model and
invert it into a per-claim \emph{sensitivity curve} $\kbar(\rhow^-)$,
read as: if the hidden siblings correlate at least $\rhow^-$, the
published margin remains certified for at most $\kbar$ hidden tries. At
$\alpha = .05$ and comparator
correlation $\rhob = .5$, 27 correlated variants require the same
critical value as $5$--$19$ independent tries across our illustrative $\rhow$ scenarios; on a
numerical grid, the exact correction flips verdicts against the naive
test in $29.7\%$ of cells and against Bonferroni in $16.1\%$. The
effect on a given board depends on its observed margins.

Which correlation enters the curve turns out to be the most
consequential empirical question in the paper. Sibling fine-tunes
agree on most items, so a correlation pooled over all items is high:
$0.90$ in a controlled family of Qwen2.5 fine-tunes. But the board
ranks by an equal-weight mean of per-benchmark accuracies, and the
sampling variance under item resampling is dominated by its
\emph{smallest} benchmark. In the same family, GSM8K ($1{,}319$ items)
contributes $91\%$ of the composite's variance at an item-level sibling
correlation of $0.40$, while MMLU ($14{,}042$ items) contributes $9\%$ at
$0.94$. The pooled estimate is close to MMLU's correlation, whereas the
score-matched correlation of the
same family is $0.46$. Hidden-selection correction is determined by the correlation
on the score the board ranks by, so the correlation that matters is a
property of the model family, score functional, and sampling model
(Proposition~\ref{prop:functional}). Resampling MMLU subjects instead
of individual items raises this family's score correlation to $0.92$:
the direction of the pooled--score gap is not universal.

Under item resampling, the Open LLM Leaderboard audit finds that only
three of 394 adjacent-rank claims exceed the uncorrected threshold
(median $z=0.17$). Hidden selection therefore matters for a small subset
of otherwise resolvable gaps. The v1 rank-3 claim loses support at
$k\ge3$, whereas a v2 claim with $z=3.01$ survives all displayed
adjustments. For v1 rank 1 ($z=2.61$), accounting for selected nuisance
estimates raises the required correlation floor from $0.689$ to $0.849$.
This exceeds the minima measured under item resampling on our public
composites, but those measurements do not bound a private family.
The claim remains contingent on its unobservable correlation floor.
The package \texttt{selective-evals} computes these conditional
sensitivity curves from aligned per-item evaluation matrices.

\textbf{Contributions.}
\begin{itemize}
  \item \textbf{Certificate.} Exact Gaussian reference tail and conservative
  bounds for heterogeneous margins, inverted into per-claim sensitivity
  curves; estimated nuisances receive fixed-$K$ asymptotic adjustments
  (\S\ref{sec:certificate}).
  \item \textbf{The right correlation.} Pooled-item and score-matched
  correlations target different dependence structures
  (Proposition~\ref{prop:functional}); only the latter governs hidden
  selection on the audited score under the stated sampling model.
  \item \textbf{Measurement.} Score-matched, min-pairwise
  within-family correlations for sibling families, bracketed by a
  broad census and controlled experiments spanning three lineages and
  two update rules (\S\ref{sec:measurement}, Appendix~\ref{app:dial}).
  \item \textbf{Validation, audit, tool.} Masking finds calibration
  consistent with conservatism under item resampling; a conditional audit covers 394 v1
  claims, a v2 snapshot, and an Arena illustration;
  \texttt{selective-evals} includes the implementation and validation
  suite (\S\ref{sec:audit}; public release on acceptance).
\end{itemize}

\section{Hidden selection: setting and scope}
\label{sec:setup}

The audit fixes three objects: a provider privately scores $k$
siblings, submits the best, and the board displays that submitted score
next to a public comparator, with per-item correctness for both
displayed models. The observed margin is therefore a maximum over a
private family, not a single draw. The losing siblings, their number,
and their within-family correlation are private.

\textbf{Scores and correlation structure.} A static benchmark scores each model on $n$ shared items. Treating
items as a super-population
\citep{miller2024adding}, scores are asymptotically Gaussian and
correlate through item-level correctness. Three components suffice:
an item-difficulty component shared by every model, which cancels in
any margin; a family component shared by the siblings, which is what
makes them alike; and idiosyncratic noise, which is what selecting a
maximum tends to increase. Standardized to unit
variance, the working model for provider $p$ with $k$ hidden variants
and fixed comparator $q$ is the two-block structure
\begin{equation}
\label{eq:twoblock}
S_{pv} = \mu_{pv} + \sqrt{\rhob}\, Z_0 + \sqrt{\rhow-\rhob}\, F_p
+ \sqrt{1-\rhow}\,
\varepsilon_v \quad (v \le k), \qquad
S_q = \mu_q + \sqrt{\rhob}\, Z_0 + \sqrt{1-\rhob}\, \varepsilon_q,
\end{equation}
with $Z_0, F_p, \varepsilon_1,\dots,\varepsilon_k,\varepsilon_q$
independent standard Gaussians: variants correlate pairwise at
$\rhow$, variant and comparator at $\rhob$; $Z_0$ (shared item
difficulty) cancels in every margin.

The construction in \eqref{eq:twoblock} requires
$0\le\rhob\le\rhow\le1$. The certificate instead operates on the
\emph{margin vector} $(S_{pv}-S_q)_v$, with variance $2(1-\rhob)$
and pairwise covariance $1+\rhow-2\rhob$. Its nonnegative-correlation
reference law exists on $\rhow\ge2\rhob-1$, $\rhob<1$.
A joint, unit-variance score model additionally requires
$1+(k-1)\rhow-k\rhob^2\ge0$.
Outside score feasibility, the reference tail can still bound feasible
Gaussian margins by Slepian's inequality; it is not the exact law of a
hidden score family with those literal parameters
(Lemma~\ref{lem:robust}).

Even independent model scores produce correlated margins because each
subtracts the same comparator score. For example, $\rhow=\rhob=0$
gives margin variance $2$ and covariance $1$, hence correlation $1/2$.
The correction therefore depends on the margins' dependence, which
combines both score correlations.

\textbf{Estimand.} A provider submits the variant with the highest
private score. The observed margin on the leaderboard is
$M = \max_{v\le k} S_{pv} - S_q$, even though the losing variants
are never seen. The testable object is the
\emph{provider-level} null
\begin{equation}
\Hprov : \max_{v\le k} \mu_{pv} \le \mu_q,
\end{equation}
i.e., ``no variant of provider $p$ truly beats $q$.'' The audit
specifies the public pair, $k$, and a lower bound
on within-family correlation, then asks whether the published margin
still certifies the provider-level claim after $k$ hidden correlated
tries.
A rejection certifies that \emph{some} hidden variant truly beats $q$;
it need not be the submitted one. The same critical value also
controls, \emph{unconditionally}, the probability of certifying a
submitted model that is not truly better
(Proposition~\ref{prop:shipped}, Appendix~\ref{app:proofs}), but
conditional validity for the private winner would require unobserved
private margins.

\textbf{Conditional verdicts and applicability.} Every numerical verdict
is conditional on a fixed candidate family and a labeled pair $(k,\rhow^-)$.
\emph{Provider-certified}: margin exceeds the level-$\alpha$ critical
value at the assumed floor $\rhow^-$.
\emph{Not margin-certified}: the observed margin is insufficient to
reject $\Hprov$ after accounting for selection under the stated assumptions.
Failure to reject does not establish that no variant improves on the comparator.
\emph{Model-insufficient}: the domain check fails
($\rhow^- < 2\hat\rhob - 1$ for near-clone comparators) and only
Bonferroni applies.
\emph{Metadata-insufficient} is a separate applicability status when the
record does not establish a fixed candidate family. The public snapshots
provide no complete private candidate logs to this audit: applicability is
unverified for every row, even when a conditional numerical verdict is
reported. Selection by a private proxy is allowed within a fixed family
(Proposition~\ref{prop:shipped}); candidates generated adaptively from
benchmark feedback are outside the guarantee.

\section{From a margin to a hidden-selection sensitivity curve}
\label{sec:certificate}

\textbf{The least-favorable null.} The certificate must hold whatever
the hidden siblings' true means are, and the auditor never learns
them. The configuration most favorable to an inflated maximum is the
one in which every sibling is exactly as good as the comparator: any
sibling that is truly worse contributes less to the maximum, and any
that is truly better makes the null false.

\begin{lemma}[Least-favorability of equal means]
\label{lem:lfc}
Under $\Hprov$ and the Gaussian margin model, $\Pr(M > m)$ is maximized at
$\mu_{p1}=\cdots=\mu_{pk}=\mu_q$ for every $m$.
\end{lemma}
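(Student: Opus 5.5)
Write the selected margin as $M=\max_{v\le k}(\delta_v+D_v)$, where $\delta_v=\mu_{pv}-\mu_q$ and $D_v=(S_{pv}-\mu_{pv})-(S_q-\mu_q)$. The centered vector $(D_1,\dots,D_k)$ is jointly Gaussian with mean zero. Its covariance (variance $2(1-\rhob)$, covariance $1+\rhow-2\rhob$) does not depend on the means at all. The whole argument therefore reduces to a monotonicity statement in the shift vector $\delta=(\delta_1,\dots,\delta_k)$ with the noise law held fixed.

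\textbf{Step 1 (coordinatewise monotonicity).} Fix $m$ and a realization of $D$. The event $\{\max_v(\delta_v+D_v)>m\}$ is increasing in each $\delta_v$: if $\delta\le\delta'$ coordinatewise, then $\max_v(\delta_v+D_v)\le\max_v(\delta'_v+D_v)$ pointwise. Taking expectations over the mean-free law of $D$ gives $\Pr_\delta(M>m)\le\Pr_{\delta'}(M>m)$. No density argument or Slepian-type comparison is needed, because only the location moves.

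\textbf{Step 2 (reduction to the boundary).} Under $\Hprov$ we have $\max_v\mu_{pv}\le\mu_q$, so every $\delta_v\le0$. Hence $\delta\le\mathbf{0}$ coordinatewise. Applying Step 1 with $\delta'=\mathbf{0}$ yields $\Pr_\delta(M>m)\le\Pr_{\mathbf 0}(M>m)$ for every $m$. The point $\delta=\mathbf 0$, that is $\mu_{p1}=\cdots=\mu_{pk}=\mu_q$, lies in $\Hprov$, so the supremum over the null is attained there.

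\textbf{Main point to check.} The only subtlety is that the noise law must really be invariant to the means. In \eqref{eq:twoblock} the means enter purely additively, and $Z_0$, $F_p$, $\varepsilon_v$, $\varepsilon_q$ are mean-free. If $\rhow$ or $\rhob$ were allowed to depend on the means, the argument would fail, so I will state explicitly that the dependence parameters are held fixed. The argument also does not use Gaussianity or the specific two-block structure; only location-shift structure is used. That makes the lemma also cover the heterogeneous-margin setting invoked later (Lemma~\ref{lem:robust}).
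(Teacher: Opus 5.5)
Your argument is correct, but it establishes monotonicity at a different level than the paper does.

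The paper uses the one-factor margin form $D_v=\delta_v+s\varepsilon_v+G$ and conditions only on the common factor $G=g$. It writes the conditional tail as $1-\prod_{v}\Phi\bigl((m-g-\delta_v)/s\bigr)$, observes that each factor decreases in $\delta_v$, and integrates over $G$. It then handles the clone case $s=0$ separately through an indicator.

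You instead condition on the entire noise vector, so monotonicity of $\max_v(\delta_v+D_v)$ in $\delta$ is immediate pathwise. You need no conditional independence, no Gaussian CDF, no exchangeability, and no case split at $\rhow=1$.

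Each route buys something different:
\begin{itemize}
\item The paper's route gives the explicit product formula. At $\delta=\mathbf{0}$ this is exactly the quadrature \eqref{eq:tail} used for computation.
\item Your route gives generality. It is in fact the version the paper later relies on when it cites ``Lemma~\ref{lem:lfc}'s coupling argument, which needs no exchangeability'' for heterogeneous, standardized margins in Lemma~\ref{lem:robust} and Proposition~\ref{prop:shipped}.
\end{itemize}

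One small adjustment is needed. For $2\rhob-1\le\rhow<\rhob$ the score-level representation \eqref{eq:twoblock} does not exist. Define $D$ directly as the centered margin vector with the stated covariance, rather than through $S_{pv}-\mu_{pv}$. Nothing else changes, since you only use that the law of $D$ is mean-free and independent of $\delta$.
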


\textbf{Exact reference tail.} At the equal-means configuration the
exchangeable reference margin
has a one-factor form: with $G \sim N(0,\, 1+\rhow-2\rhob)$
independent of the noise,
\begin{equation}
\label{eq:tail}
\Pr\!\left(M > m\right) \;=\;
\mathbb{E}_G\!\left[\,1-\Phi\!\left(\frac{m-G}{\sqrt{1-\rhow}}\right)^{\!k}\,\right],
\end{equation}
Conditional on the shared term $G$, the $k$ margins have independent
residual noise. The $k$th power is therefore the probability that all
stay below $m$; subtracting from one gives the probability that at least
one exceeds it. Averaging over $G$ accounts for their shared variation
and requires only a one-dimensional integral.
This is the classical one-factor Dunnett
representation \citep{dunnett1955multiple}, mapped to the
hidden-selection margin. An auditor observes
$z = M / \sqrt{2(1-\rhob)}$. We write $p_k(z;\rhow,\rhob)$ for the
corresponding tail and $c_\alpha(k;\rhow,\rhob)$ for its
level-$\alpha$ critical value. At the clone boundary $\rhow \to 1$,
\eqref{eq:tail} reduces to the $k=1$ law: $k$ is unidentifiable
precisely where it is irrelevant.

\textbf{Certificate.} For an observed $z$, the certificate is the curve
\begin{equation}
\rhow^- \;\longmapsto\; \kbar(z;\rhow^-,\rhob) \;=\;
\sup\bigl(\{k \in \mathbb{N} : p_k(z;\rhow^-,\rhob) \le \alpha\}
\cup \{0\}\bigr) \in \mathbb{N}_0 \cup \{\infty\},
\end{equation}
the largest hidden multiplicity for which the claim remains certified
if the hidden family's correlation is at least $\rhow^-$ (proofs in
Appendix~\ref{app:proofs}).

For example, $\kbar(0.56)=17$ would certify the claim for up to 17
hidden variants, provided every sibling pair has correlation at least
$0.56$. It would not certify the claim at $k=27$ under that floor,
nor estimate how many variants the provider actually tested.

The curve is a hidden-selection sensitivity analysis. At the clone boundary
$\rhow^- \to 1$, the correction approaches the single-test threshold
because identical tries cannot inflate a maximum; at the independence boundary
$\rhow^- = 2\rhob - 1$, $\kbar$ recovers \v{S}id\'ak's finite count;
at $\rhow^- = \rhob$, the standard Dunnett limit. Between these
reference points, $k$ correlated hidden variants have the same critical value as
$\neff < k$ independent ones (Figure~\ref{fig:main}), and the curve
reports the unobserved within-family correlation assumption directly.

\textbf{Why a minimum, not a mean.} Real families are not
equicorrelated. Under the lemma's comparator and variance assumptions, replacing
all within-family correlations by their minimum gives an
equicorrelated upper bound on the selected tail. A family mean does
not supply that bound. For example, a family containing many nearly
identical variants and one weakly correlated variant can have a high
mean correlation. That additional variant still gives selection another
opportunity for an unusually high score; the mean can obscure it.
Figure~\ref{fig:validation-support}a shows the empirical
consequence of substituting a mean for a minimum.

\begin{lemma}[Monotonicity in $\rhow$, and the min-pairwise rule]
\label{lem:mono}
The least-favorable tail is nonincreasing in $\rhow$ on
$[2\rhob - 1, 1]$; and for any heterogeneous within-family correlation
matrix with $\min_{u \ne v} \rho_{uv} \ge 2\rhob - 1$ (homogeneous
comparator correlation), the equicorrelated tail at
$\rhow^- = \min_{u \ne v} \rho_{uv}$ is an upper bound (Slepian).
\end{lemma}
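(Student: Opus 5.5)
Both parts reduce to Slepian's inequality applied to the margin vector $D_v = S_{pv}-S_q$ at the equal-means configuration of Lemma~\ref{lem:lfc}. Write $\Sigma(\rhow)$ for its covariance: diagonal $2(1-\rhob)$, off-diagonal $1+\rhow-2\rhob$. Slepian's inequality says the following. Let $X$ and $Y$ be centered Gaussian vectors with equal variances and $\mathrm{Cov}(X_u,X_v)\le \mathrm{Cov}(Y_u,Y_v)$ for all $u\ne v$. Then $\Pr(\max_v X_v > m) \ge \Pr(\max_v Y_v > m)$ for every $m$. The direction is that larger correlation gives a smaller tail for the maximum.

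\textbf{Monotonicity.} Take $2\rhob-1\le \rhow<\rhow'\le 1$. Changing $\rhow$ leaves the diagonal $2(1-\rhob)$ unchanged and raises every off-diagonal entry by $\rhow'-\rhow\ge 0$. Slepian then gives $\Pr_{\rhow'}(M>m)\le \Pr_{\rhow}(M>m)$ for every $m$, so the tail is nonincreasing.

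I would also check this directly from \eqref{eq:tail}, because the endpoint $\rhow=1$ is degenerate and Slepian is usually stated for nondegenerate vectors.
\begin{itemize}
\item Write $D_v = G + \sqrt{1-\rhow}\,\eta_v$ with $\mathrm{Var}(G)=1+\rhow-2\rhob\ge 0$. This is a valid representation exactly on the stated domain $\rhow\ge 2\rhob-1$.
\item Raising $\rhow$ moves variance from the independent part $\sqrt{1-\rhow}\,\eta_v$ into the common factor $G$.
\item The map $x\mapsto \Pr(\max_v(x+\sigma\eta_v)>m)$ and a mean-preserving spread argument would give an alternative proof. I would treat this as optional.
\item At $\rhow=1$, the tail is the $k=1$ law. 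It is the limit of the tails as $\rhow\to 1$, so the endpoint follows by continuity of Gaussian laws in their covariance.
\end{itemize}

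\textbf{Min-pairwise bound.} Let the siblings have pairwise correlations $\rho_{uv}$, with every sibling–comparator correlation equal to $\rhob$ and unit variances. The margin covariances are then $\mathrm{Var}(D_v)=2(1-\rhob)$ and $\mathrm{Cov}(D_u,D_v)=\rho_{uv}+1-2\rhob$. Set $\rhow^-=\min_{u\ne v}\rho_{uv}\ge 2\rhob-1$. Then:
\begin{itemize}
\item The equicorrelated comparison matrix $\Sigma(\rhow^-)$ is a valid covariance matrix by the one-factor construction above.
\item It has the same diagonal as the true margin covariance.
\item Its off-diagonal entries are entrywise no larger than the true ones.
\end{itemize}
Slepian therefore gives $\Pr(\max_v D_v>m)\le \Pr_{\Sigma(\rhow^-)}(\max_v D_v>m)$ at equal means.

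To extend this to every mean vector under $\Hprov$, apply Lemma~\ref{lem:lfc} to the heterogeneous family. That lemma is a coordinatewise monotonicity argument: $\max_v(\mu_{pv}-\mu_q + D_v)$ is nondecreasing in each mean, so it does not use equicorrelation. Raising each mean to $\mu_q$ can only increase the tail, and chaining the two inequalities gives the upper bound.

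\textbf{Main obstacle: the domain of Slepian.} The heterogeneous margin matrix must itself be a valid covariance matrix. This holds automatically when it comes from an actual joint score model, since it is then a covariance. The comparison matrix $\Sigma(\rhow^-)$ might not correspond to a feasible score family, and that is acceptable because Slepian only needs both margin laws to exist as Gaussians. The remaining work is to handle possible singularity, either at $\rhow^-=1$ or in a near-clone heterogeneous family. I would first prove the strict version for nondegenerate matrices. I would then perturb by adding $\epsilon I$ to both matrices, which preserves the diagonal-equality and off-diagonal-ordering hypotheses. Finally I would let $\epsilon\to 0$, using continuity of $\Pr(\max_v D_v>m)$ at every $m$; atoms can occur only in degenerate cases, which I would handle through right-continuity or by passing to the clone limit.
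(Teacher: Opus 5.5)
Your proof is correct and follows the paper's argument. Slepian's inequality is applied to the margin vector, whose common variance $2(1-\rhob)$ does not depend on the within-family structure while the off-diagonal entries $1+\rhow-2\rhob$ (or $\rho_{uv}+1-2\rhob$) grow, and this gives both monotonicity in $\rhow$ and entrywise domination by the equicorrelated law at $\min_{u\ne v}\rho_{uv}$. Your additions are sound refinements that the paper leaves implicit: the clone endpoint, the extension from equal means to all of $\Hprov$ via the pointwise coupling behind Lemma~\ref{lem:lfc}, and the $\epsilon I$ perturbation (largely unneeded, because $\rhob<1$ makes every $D_v$ nondegenerate, so $\max_v D_v$ has no atoms).
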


\textbf{Siblings need not match the comparator uniformly.} Each
sibling may correlate with the comparator differently and have a
different score variance. The same bound applies over a box of those
quantities, using a worst-case floor computed on the box.

\begin{lemma}[Robustness to comparator heterogeneity and bounded
variance ratios]
\label{lem:robust}
If within-family correlations satisfy $\rho_{uv} \ge \rhow^-$,
comparator correlations $\beta_v \le \beta^+$, and score SD ratios
$\sigma_v/\sigma_q \in [1/R, R]$, the one-factor tail
\eqref{eq:tail} at a worst-case floor $r^- = \inf r_{uv}$ (over the
standardized margins' correlation on this box) bounds the selected
margin under $\Hprov$ for any selection rule $W$. At $R = 1$,
$r^- = (1+\rhow^--2\beta^+)/(2-2\beta^+)$, independent of $\beta^-$;
for $R > 1$, $r^-$ is certified by interval branch and bound
(Appendix~\ref{app:robustproof}). If $r^- < 0$ the verdict is
\emph{model-insufficient}.
\end{lemma}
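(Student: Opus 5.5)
The plan is to reduce the heterogeneous problem to the equicorrelated one already handled by Lemmas~\ref{lem:lfc} and~\ref{lem:mono}, by standardizing margins and applying Slepian's inequality to the maximum of the standardized margins.

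\textbf{Step 1: standardize the margins.} Write $D_v = S_{pv}-S_q$. Its variance is $\tau_v^2=\sigma_v^2+\sigma_q^2-2\beta_v\sigma_v\sigma_q$. For $u\ne v$, the covariance is $\rho_{uv}\sigma_u\sigma_v-\beta_u\sigma_u\sigma_q-\beta_v\sigma_v\sigma_q+\sigma_q^2$. Set $t_v=\sigma_v/\sigma_q$. The standardized correlation $r_{uv}$ is then a function of $(\rho_{uv},\beta_u,\beta_v,t_u,t_v)$ only. It is nondecreasing in $\rho_{uv}$, so on the box $\rho\ge\rhow^-$, $\beta\le\beta^+$, $t\in[1/R,R]$ it is bounded below by $r^-=\inf r_{uv}$.

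\textbf{Step 2: handle the means and the selection rule.} Under $\Hprov$ each $\mathbb{E}D_v\le 0$. For any selection rule $W$, the selected margin satisfies $D_W\le \max_v D_v$. We test the standardized statistic $D_W/\tau_W\le \max_v D_v/\tau_v$. Shifting means up to zero only increases the maximum, exactly as in Lemma~\ref{lem:lfc}. It therefore suffices to bound $\Pr(\max_v Y_v>c)$, where $Y_v=(D_v-\mathbb{E}D_v)/\tau_v$ is centered with unit variance and correlations $r_{uv}\ge r^-$.

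\textbf{Step 3: compare with the equicorrelated law.} Assume $r^-\ge0$. Slepian's inequality says that increasing correlations while keeping unit variances lowers $\Pr(\max Y_v>c)$. Hence
\[
\Pr(\max_v Y_v>c)\le \Pr(\max_v Y^{(r^-)}_v>c),
\]
where $Y^{(r^-)}$ is exchangeable with correlation $r^-$. That law is the one-factor tail \eqref{eq:tail}: in standardized units, take a common factor of variance $r^-$ and residuals of variance $1-r^-$. This is what the statement calls evaluating \eqref{eq:tail} at the floor. If $r^-<0$, the exchangeable one-factor representation fails, so we return \emph{model-insufficient}; Bonferroni remains valid in that case.

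\textbf{Step 4: compute $r^-$ at $R=1$.} Set $t_u=t_v=1$. Then
\[
r_{uv}=\frac{1+\rho_{uv}-\beta_u-\beta_v}{\sqrt{(2-2\beta_u)(2-2\beta_v)}}.
\]
Write $a_v=1-\beta_v\ge 1-\beta^+>0$. The expression becomes
\[
r_{uv}=\frac{(\rho_{uv}-1)+a_u+a_v}{2\sqrt{a_ua_v}}.
\]
By AM--GM, $(a_u+a_v)/(2\sqrt{a_ua_v})\ge1$. The remaining term $(\rho-1)/(2\sqrt{a_ua_v})$ is negative, and its magnitude is largest when $\sqrt{a_ua_v}$ is smallest. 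The main care point is that these two pieces pull in opposite directions, so monotonicity of the whole expression has to be checked directly.

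Fix the product $P=a_ua_v$. Since $a_u+a_v\ge 2\sqrt P$, we get $r\ge (\rho-1+2\sqrt P)/(2\sqrt P)=1-(1-\rho)/(2\sqrt P)$. This lower bound is increasing in $P$. It is therefore minimized at $a_u=a_v=1-\beta^+$, which gives $r^-=(1+\rhow^--2\beta^+)/(2-2\beta^+)$. No lower bound on $\beta$ is involved, which is why $\beta^-$ drops out.

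\textbf{Step 5: the case $R>1$.} The infimum of a smooth function over a compact box is certified by interval branch and bound on $(\beta_u,\beta_v,t_u,t_v)$, with $\rho$ fixed at $\rhow^-$ by monotonicity. I expect the main obstacle to be making this rigorous: the bound must be a guaranteed lower enclosure, not a sampled minimum. Outward rounding makes the certified $r^-$ valid, and Steps 2--3 then apply unchanged.
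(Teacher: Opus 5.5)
Your proposal is correct and follows essentially the same route as the paper. Both arguments bound the selected standardized margin pointwise by the maximum, apply the mean-shift coupling, use Slepian on the unit-variance margins against the equicorrelated law at $r^-\ge 0$, and use interval branch and bound for $R>1$. Your $R=1$ computation (AM--GM at fixed product $a_ua_v$, then monotonicity in that product) is a two-step reading of the paper's identity $f-f(\beta^+,\beta^+)=\bigl[(\sqrt{x}-\sqrt{y})^2+d(\sqrt{xy}-1)\bigr]/(2\sqrt{xy})\ge 0$.
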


\noindent The bound is monotone in both caps ($r^-$ nonincreasing in
$\beta^+$ and, by box nesting, in $R$), so a Berger--Boos treatment of
$\beta_W$ needs only an upper endpoint
$\beta^+_{\mathrm{BB}} = U + \Delta$.

\begin{proposition}[Monotonicity in the comparator correlation]
\label{prop:rhobmono}
On $\rhow \ge 2\rhob - 1$, $\rhob < 1$, the least-favorable
standardized tail $p_k(z;\rhow,\rhob)$ is nondecreasing in $\rhob$: it
depends on $(\rhow,\rhob)$ only through
$r = (1+\rhow-2\rhob)/(2-2\rhob)$, nonincreasing in $\rhob$, and the
Gaussian maximum tail is nonincreasing in $r$ (strictly, in the
interior, for $k \ge 2$ and $\rhow < 1$).
\end{proposition}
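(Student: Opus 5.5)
The plan has three steps: reduce the standardized tail to a function of the single margin correlation $r$, show that $r$ moves in the right direction as $\rhob$ varies, and show that the Gaussian maximum tail is monotone in $r$.

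\emph{Step 1: reduction to $r$.} Work at the equal-means configuration of Lemma~\ref{lem:lfc}. The margin vector $(S_{pv}-S_q)_v$ has common variance $2(1-\rhob)$ and common covariance $1+\rhow-2\rhob$. Divide by $\sqrt{2(1-\rhob)}$, which is legitimate since $\rhob<1$. The standardized margins $Z_v$ are then exchangeable Gaussians with unit variance and pairwise correlation
\[
r=\frac{1+\rhow-2\rhob}{2-2\rhob}.
\]
The condition $\rhow\ge 2\rhob-1$ is exactly $r\ge 0$. The one-factor representation \eqref{eq:tail} therefore rewrites as
\[
p_k(z;\rhow,\rhob)=\mathbb{E}_{G'}\!\left[1-\Phi\!\left(\frac{z-\sqrt r\,G'}{\sqrt{1-r}}\right)^{k}\right],\qquad G'\sim N(0,1).
\]
This depends on $(\rhow,\rhob)$ only through $r$.

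\emph{Step 2: monotonicity of $r$ in $\rhob$.} Write $r=\tfrac12+\tfrac{\rhow}{2(1-\rhob)}$. Then
\[
\frac{\partial r}{\partial\rhob}=\frac{\rhow}{2(1-\rhob)^2}.
\]
This derivative is $\ge0$ when $\rhow\ge0$ and $\le0$ when $\rhow<0$. So "nonincreasing in $\rhob$" holds on the score-feasible region $\rhow\ge\rhob\ge0$ only after a sign check, and I would verify it carefully. The cleaner way to see the sign is $1-r=(1-\rhow)/(2(1-\rhob))$: as $\rhob$ increases, $1-r$ increases exactly when $1-\rhow>0$. Hence for $\rhow<1$ the correlation $r$ is strictly decreasing in $\rhob$, and I would record this as $r$ nonincreasing, with constancy at $\rhow=1$. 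This sign bookkeeping is the step I expect to be the main obstacle. The two expressions for $r$ must be reconciled algebraically and checked against a numeric example, such as $\rhow=\rhob=0$ giving $r=1/2$, before committing to the direction.

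\emph{Step 3: monotonicity of the maximum tail in $r$.} Take $r_1<r_2$ in $[0,1]$. The equicorrelated covariance matrices share a unit diagonal, and their off-diagonal entries are ordered. Slepian's inequality, as already used in Lemma~\ref{lem:mono}, gives $\Pr(\max_v Z_v\le z)$ nondecreasing in $r$, so the tail is nonincreasing in $r$.

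\emph{Strictness.} For strictness in the interior with $k\ge2$ and $\rhow<1$, use Plackett's identity. The derivative of the joint CDF with respect to each off-diagonal entry equals a bivariate density times a conditional probability, both strictly positive for nondegenerate covariance ($r<1$). Summing over the $k(k-1)/2$ pairs gives a strictly positive derivative.

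Composing Steps 1–3, a decreasing map of a decreasing map, yields that $p_k$ is nondecreasing in $\rhob$, strictly in the interior.
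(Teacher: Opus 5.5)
Your route is the paper's own. You standardize to unit-variance equicorrelated margins so that $p_k(z;\rhow,\rhob)=Q_k(z;r)$, read the sign of $\partial r/\partial\rhob$ from $1-r=(1-\rhow)/(2(1-\rhob))$, apply Slepian in $r$, and get strictness from Plackett's identity.

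The one thing to fix is the first formula in Step 2. It is an algebra slip, not a genuine sign subtlety: it drops a $-\rhob$ from the numerator. The correct split is
\[
r=\tfrac12+\frac{\rhow-\rhob}{2(1-\rhob)},\qquad
\frac{\partial r}{\partial\rhob}=-\frac{1-\rhow}{2(1-\rhob)^2}\le 0 ,
\]
which agrees with your $1-r$ form. So $r$ is nonincreasing in $\rhob$ on the whole margin domain $\rhow\ge2\rhob-1$, $\rhob<1$, and strictly decreasing when $\rhow<1$. This holds whatever the sign of $\rhow$, so there is no need to restrict to the score-feasible region. The ``main obstacle'' you flag disappears once the slip is removed.

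Your proposed sanity check at $\rhow=\rhob=0$ cannot detect the error, because the two expressions coincide whenever $\rhob=0$. A check with $\rhob\neq0$ exposes it immediately: at the Dunnett point $\rhow=\rhob=\tfrac12$ the correct value is $r=\tfrac12$, while the slipped formula gives $r=1$.
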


\noindent Two reporting rules follow. The within-family floor is a
minimum, not a mean, labeled as a population assumption rather than a
universal constant; and floors outside the margin domain
$\rhow \ge 2\rhob - 1$ are reported as \emph{model-insufficient}, not
substituted into the tail. The same critical value gives an
unconditional submitted-model guarantee under true standardization
(Proposition~\ref{prop:shipped}, Appendix~\ref{app:proofs}).

\textbf{Accounting for estimated inputs.} The curve needs two more inputs the
auditor estimates from the winner's own public data, the comparator
correlation $\hat\rhob$ and the margin's standard error. These estimates
can also be affected by selection. We compare successive adjustments
that keep the same scope while accounting for one additional
selected quantity at each adjustment. Plug-in
treats the winner's denominator as fixed; selected-$\beta$ accounts
for the possibility that the winner was also selected through its
estimated comparator correlation; the joint adjustment covers selected
standardization from the raw margin. A public auditor sees $\beta_W$,
not the hidden vector
$(\beta_1,\dots,\beta_k)$, so the selected-$\beta$ adjustment uses
Berger--Boos. With Lemma~\ref{lem:robust}'s heterogeneity adjustment,
the selected-$\beta$ construction at count $K$ is
\begin{equation}
\label{eq:svbb}
p^{\mathrm{SVBB}}_K \;=\;
\min\bigl\{1,\; Q_K\bigl(z;\, r^-(R;\rhow^-,\beta^+_K)\bigr)
+ \gamma\bigr\},
\end{equation}
with $\beta^+_K = U_W(\gamma/K) + \Delta$ and
$\gamma = 0.005$ ($= \alpha/10$, following \citealp{berger1994p})
throughout. This column still uses a plug-in SE; the joint version
also bounds the SE using the raw margin (Appendix~\ref{app:svbb}).
Here $Q_K(z;r)$ is the upper tail of the maximum of $K$ standard
Gaussians with common correlation $r$. The fixed-candidate upper bound
$U_W(\eta)$ has nominal noncoverage $\eta$; using $\eta=\gamma/K$
accounts for selecting the winner. The assumed excess $\Delta$ bounds
how much another sibling's comparator correlation can exceed the winner's.
The added $\gamma$ allocates error probability to failure of the bound.
Exact nuisance bounds give finite-sample Gaussian validity only with
true standardization or a joint region covering it. Our influence-function
adjustments are pointwise-in-law asymptotic for fixed $K$.
For estimated nuisances we report the finite-range budget
$\widetilde{k}_{100}=\max(\{K\le100:p^{\mathrm{adj}}_K\le\alpha\}\cup\{0\})$,
over domain-valid counts, using the labeled adjustment; all-domain
failure is M-I. A value of $100$ is reported as $\ge100$, not an unlimited budget.
This reporting cap is not a finite-sample accuracy guarantee, nor a
simultaneous confidence statement over all correlation floors.
Appendix~\ref{app:svbb} gives the approximation conditions. The one nuisance no public retrospective audit
can estimate is $\rhow$ itself, because the hidden family is never
observed, which is why the output remains a curve.

\textbf{Effective multiplicity.} The independent-equivalent count
$\neff$ matches the Gaussian critical value to that of independent tries.
At $k=27$, $\alpha=.05$, and $\rhob=.5$, it ranges from $5$ to $19$
across the illustrative scenarios (Figure~\ref{fig:main}; numerical
comparisons and grid specification in Appendix~\ref{app:grid}).

\begin{figure}[!htbp]
\centering
\includegraphics[width=\linewidth]{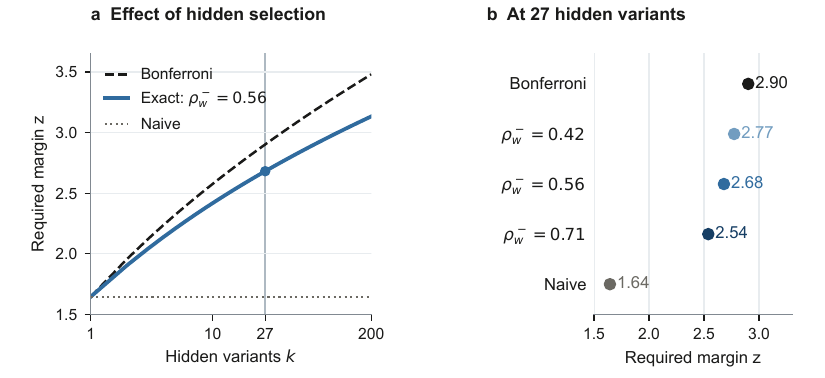}
\caption{\textbf{Hidden selection raises the margin needed for certification.}
A claim is certified when its observed standardized margin $z$ exceeds
the plotted threshold ($\rhob=0.5$, $\alpha=0.05$).
\textbf{(a)} Thresholds increase with hidden multiplicity; the exact curve
assumes $\rhow^-=0.56$. \textbf{(b)} At $k=27$, higher within-family
correlation lowers the exact threshold. The naive test ignores selection;
Bonferroni requires no correlation assumption. The count 27 is a sensitivity
reference, not an estimate of a provider's search.}
\label{fig:main}
\end{figure}
\vspace{0.5em}

\section{Correlation must match the ranked score}
\label{sec:measurement}

\textbf{Two correlation estimands.} The certificate is indexed by the correlation of
the score the board ranks by, and that is not the correlation most
readily measured. The Open LLM Leaderboard ranks by the equal-weight
mean of six per-benchmark accuracies. Under within-benchmark item
resampling, each benchmark's share of the composite's sampling
variance scales with $1/n_b$, so smaller benchmarks can contribute disproportionately;
a correlation pooled over items weights each benchmark by its item
share, so the largest benchmark weighs most. The two estimators
therefore report different benchmarks' correlations. The intro's
example (pooled $0.90$, score-matched $0.46$, in the Qwen2.5-1.5B
full-fine-tune stratum below) is this weighting at work, and in
$4/12$ observational families the two estimates disagree even on
which family is more correlated.

For intuition, consider two equally weighted benchmarks with the same
per-item variance but tenfold different item counts. The smaller benchmark
contributes ten times as much sampling variance to the composite, although
it supplies only one eleventh of the pooled items.

\begin{proposition}[Macro- and micro-weighted correlations target
different geometries]
\label{prop:functional}
Let the reported score be the equal-weight mean of $G$ per-benchmark
accuracies. Under within-benchmark item resampling, benchmark $b$
contributes $\operatorname{Var}_b/(G^2 n_b)$ to the score's sampling
variance, so its weight in the \emph{score} correlation scales as
$1/n_b$; a pooled-item covariance instead weights by item shares
$\pi_b$ and includes an extra between-benchmark term,
\[
\operatorname{Cov}_{\mathrm{pool}}(X_A, X_B) = \sum_b \pi_b\,
\sigma_{AB,b} + \sum_b \pi_b (\mu_{A,b}-\mu_A)(\mu_{B,b}-\mu_B),
\]
cross-benchmark difficulty the score correlation never sees. The two
estimands need not agree in magnitude \emph{or in ordering across
families}; the certificate requires the former.
\end{proposition}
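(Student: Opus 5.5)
The proof has three parts: the score-level variance decomposition, the pooled covariance via the law of total covariance, and an explicit counterexample showing that the two correlations can rank families differently.

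\textbf{Score side.} Write the composite for model $A$ as $S_A = G^{-1}\sum_b \bar X_{A,b}$, where $\bar X_{A,b}=n_b^{-1}\sum_{i\in b}X_{A,i}$. Items are resampled independently within benchmarks and independently across benchmarks. Hence
\[
\operatorname{Cov}(S_A,S_B)=\sum_b \frac{\sigma_{AB,b}}{G^2 n_b},\qquad
\operatorname{Var}(S_A)=\sum_b \frac{\operatorname{Var}_{A,b}}{G^2 n_b},
\]
where $\sigma_{AB,b}$ is the within-benchmark item-level covariance. Define weights $w_b\propto 1/n_b$ and within-benchmark correlations $\rho_b$. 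Then the score correlation is a weighted combination of the $\rho_b$ with weights $w_b\sqrt{\operatorname{Var}_{A,b}\operatorname{Var}_{B,b}}$, normalized by the square roots of the two weighted variances. This is exactly the ``weight scales as $1/n_b$'' claim. It needs only bilinearity of covariance and independence of the resampled items.

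\textbf{Pooled side.} Treat a uniformly drawn item from the pooled set as a random pair $(\text{benchmark } b,\ \text{item})$ with $\Pr(b)=\pi_b=n_b/\sum_c n_c$. The law of total covariance, conditioning on $b$, gives
\[
\operatorname{Cov}(X_A,X_B)=\mathbb{E}_b[\sigma_{AB,b}]+\operatorname{Cov}_b(\mu_{A,b},\mu_{B,b}),
\]
which is the displayed identity with $\mu_A=\sum_b\pi_b\mu_{A,b}$. The second term depends on between-benchmark mean differences. The score correlation never sees this term, because the composite's sampling variance conditions on the benchmark partition: the benchmark means are fixed, not resampled.

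\textbf{Disagreement in magnitude and ordering.} I would construct an explicit two-benchmark, two-family example, mirroring the intuition paragraph with $n_2=10n_1$ and equal per-item variances. Family~1 has $\rho_1=0.4$ and $\rho_2=0.95$. Family~2 has $\rho_1=0.7$ and $\rho_2=0.75$, with equal benchmark means so the between term vanishes. Then:
\begin{itemize}
\item The score correlations are roughly $(10\cdot0.4+0.95)/11\approx0.45$ for family~1 and $\approx0.70$ for family~2.
\item The pooled correlations are $(0.4+10\cdot0.95)/11\approx0.90$ for family~1 and $\approx0.75$ for family~2.
\end{itemize}
The ordering of the two families therefore flips, which establishes both claims. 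Adding nonzero between-benchmark mean differences only enlarges the gap and is not needed.

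\textbf{Why the certificate needs the score correlation.} By construction in Section~\ref{sec:setup}, the margins entering Lemma~\ref{lem:lfc} and \eqref{eq:tail} are differences of composite scores. The asymptotic Gaussian law of $(S_{pv})_v$ under item resampling therefore has exactly the score covariance computed above. This identifies $\rhow$ in Lemma~\ref{lem:mono} as the score-level correlation.

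The main obstacle is precision rather than difficulty. The per-benchmark weights in the correlation involve variances that differ across the two models, so ``scales as $1/n_b$'' holds exactly only with equal variances; in general it holds as a weighting of covariances. I would state it that way.
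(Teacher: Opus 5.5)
Your derivation is correct, and its first two steps match the paper's. The score-side formula $\operatorname{Cov}(S_A,S_B)=\sum_b \sigma_{AB,b}/(G^2 n_b)$ is exactly the paper's score-matched estimator (the $1/36$ formula with $G=6$). The pooled identity is the law of total covariance over a uniformly drawn item, which the paper shows as an exact within-benchmark plus between-benchmark decomposition for the worked pair.

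The genuine difference is in the existential part. The paper does not construct a counterexample; it supports ``need not agree in magnitude or ordering'' empirically. Its evidence is pooled $0.894$ versus score-matched $0.459$ in the worked Qwen family, disagreements in $4/12$ curated families, and the subject-resampling analysis showing that even the sign of the pooled--score gap depends on the sampling model. You instead build a synthetic two-benchmark, two-family witness, and your arithmetic checks out: $0.45$ vs.\ $0.70$ at the score level and $0.90$ vs.\ $0.75$ pooled. Your route gives a clean existence proof inside the stated model, free of estimation noise and curation choices. The paper's route shows the reversal actually occurs on audited leaderboard families, which is what justifies making the score-matched estimator the reporting rule. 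Your remark that the $1/n_b$ weighting is exact only with equal per-model variances is a fair sharpening of the statement.

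One side claim should be dropped or qualified: between-benchmark mean differences do not ``only enlarge the gap.'' The between term can be negative. Also, if one model's benchmark means are flat while the other's vary, the term vanishes in the cross-covariance yet inflates the second model's pooled variance, which lowers the pooled correlation. Since your example sets this term to zero, the proof itself is unaffected.
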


\noindent Every sensitivity curve in \S\ref{sec:audit} therefore uses
the score-matched estimator.

\textbf{Variance decomposition.}
Figure~\ref{fig:score-summary} shows both the mechanism and its
controlled-family consequence. In five Qwen2.5-1.5B full-FT variants,
GSM8K is only $8.6\%$ of the items but contributes about $91\%$ of the
two-task score variance. Pooling emphasizes MMLU and adds a
between-benchmark mean term; independent task resampling emphasizes
GSM8K and has no such term. The resulting family medians are $0.894$
pooled and $0.459$ score-matched.

\begin{figure}[!htbp]
\centering
\includegraphics[width=\linewidth]{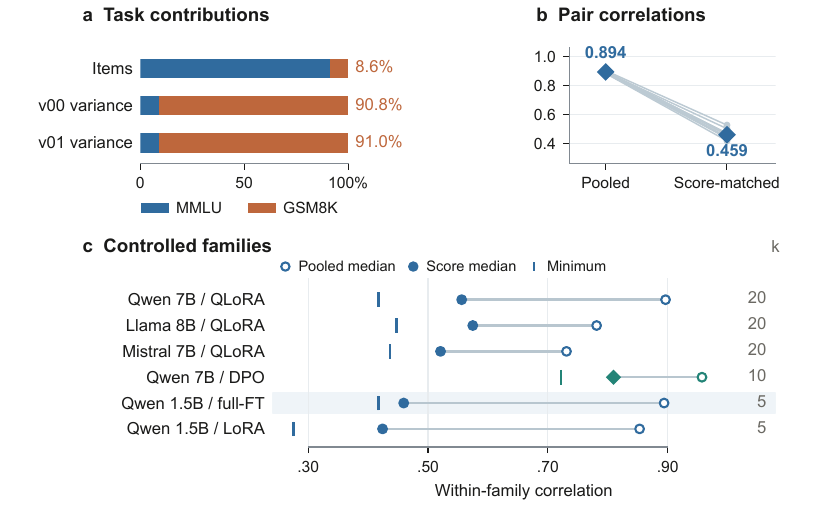}
\caption{\textbf{The ranked score changes the relevant correlation under item resampling.}
\textbf{(a)} In the Qwen2.5-1.5B full-FT worked family, GSM8K is a
small item share but dominates the two-task score variance
(blue: MMLU; orange: GSM8K).
\textbf{(b)} The same ten model pairs have high pooled-item
correlation and much lower ranked-score correlation; diamonds are
medians. \textbf{(c)} Across six controlled strata, pooled medians
remain high while score-matched medians and minima can be much lower.
Open points are pooled medians; filled points are score-matched
medians; ticks are score-matched minima. Subject-resampling sensitivity
is reported separately in Appendix~\ref{app:cluster}.}
\label{fig:score-summary}
\end{figure}

\textbf{Data and estimator.} The Open LLM Leaderboard per-item
correctness matrix \citep[via][]{polo2024tinybenchmarks}: 395 models
$\times$ 28{,}659 items across six benchmarks. We hand-curate sibling
families (same organization, same base model, an explicit variant
series in the name), yielding 12 families with $k = 3$--$8$
(deterministic rules and membership in
Appendix~\ref{app:measurement}). Under within-benchmark item
resampling, $\operatorname{Cov}(s_A, s_B) = \tfrac{1}{36}\sum_b
\operatorname{Cov}_b(X_A, X_B)/n_b$; this score-matched estimate is
primary, with item-bootstrap confidence intervals stratified by
family and benchmark. Pooling instead shifts the median from $0.707$
to $0.748$ and reverses the gap's sign for $4/12$ families.

\textbf{Results (Appendix Figure~\ref{fig:family-evidence}a).} Score-matched family
means span $0.658$--$0.855$ (median $0.707$); the min-of-mins across
the 12 families is $0.565$ under item resampling. Slepian requires a population minimum
for the hidden family; a public sample minimum does not supply it.
Item-bootstrap CIs are about $\pm0.005$ under independent item sampling,
excluding subject dependence and benchmark-construction uncertainty.
We therefore use these measurements to illustrate scenarios, not to
calibrate a likely hidden-family floor.
Near-clone pairs reach $0.985$. Between providers, $\hat\rhob = 0.500$
(accuracy-matched $0.601$). Per benchmark, median within-family
$\hat\rhow$ ranges from $0.44$ (GSM8K) to $0.83$
(TruthfulQA/HellaSwag).

\textbf{Beyond hand-picked families.} A census of v2
submitter-declared \texttt{base\_model} tags (45 org-disjoint
families, per-item MMLU-Pro) broadens the measurements: median family mean
$0.746$, quartiles $[0.59, 0.80]$, and 17/45 below $0.56$
min-pairwise. \citet{kim2025correlated}'s independent dataset gives
the same ordering (same-org version series $0.778$, cross-base
$0.475$, matching our $\hat\rhob$), and prompt-format variation on
\citet{promteval2024}'s $15 \times 100$ template grid moves same-base
$\hat\rhow$ only modestly (IQR $0.019$--$0.033$) against
cross-provider $0.33$--$0.50$ (Appendix~\ref{app:measurement}). The
three controlled families whose $k = 20$ approaches the audit's
$k = 27$ have score-matched minima $0.42$--$0.45$, at the census
lower tail. We use $0.42/0.56/0.71$ as \emph{illustrative sensitivity anchors},
not estimated or conservative lower bounds for private families. Observed
minima also depend on family size and truncation (Appendix~\ref{app:measurement}).

\textbf{Controlled strata.} Because observational families are found
rather than made, we train six controlled strata to reduce reliance on
curated observational families (Figure~\ref{fig:score-summary}c; Appendix~\ref{app:dial}): 20 QLoRA
variants each of Qwen2.5-7B, Llama-3.1-8B and Mistral-7B-v0.3
(pre-specified factors: data mixture, rank, steps, seed), 10 DPO
variants of the same Qwen base, and two $5$-variant strata of
Qwen2.5-1.5B (full fine-tune and LoRA), all evaluated per-item on
MMLU$+$GSM8K. At the pooled item level every SFT stratum is uniformly
correlated (medians $0.73$--$0.90$), but score-matched medians on the
audited composite drop to $0.42$--$0.58$: the selection-relevant
correlation is not a family invariant. Three further patterns matter
for the audit. The DPO family is the informative outlier: preference
optimization moves the model so little that its score-matched median
stays at $0.81$ (pooled $0.96$), illustrating why one update rule
cannot calibrate the correlation of another. The Qwen2.5-1.5B full-FT stratum reaches the
edge of the equal-variance model ($R = 1.17$ against an external
comparator), and Lemma~\ref{lem:robust}'s joint $(\beta^+, R)$ column
restores coverage; Table~\ref{tab:audit} therefore defaults to
$R = 1$ and reports the $R > 1$ stress test separately. Across all
six strata, the certificate-relevant excess
$\max_v \beta_v - \beta_W$ has median $0.010$ and maximum $0.121$,
absorbed into \S\ref{sec:audit}'s adjusted verdicts. Selection
replays ($B = 10^5$) over $18$ family-stratum$\times$comparator
combinations find no resolved homogeneous-tail failure in $17$ of $18$;
the single resolved failure is the equal-variance cell whose
empirical pointwise domination the $R$ column restores
(Appendix~\ref{app:dial}).
\par

\textbf{Changing the sampling unit.} Resampling MMLU's 57 subjects
as whole blocks, while keeping its item-weighted score and resampling
other benchmarks by item, changes the worked family's score-matched
median from $0.459$ to $0.921$ ($B=20{,}000$; Appendix~\ref{app:cluster}).
Thus the pooled--score contrast is conditional on the sampling model.
For v1 \#1, cluster-matched standardization changes $z$ from $2.61$ to
$2.53$ and comparator correlation from $0.567$ to $0.759$; the plug-in
required floor at $k=27$ rises from $0.689$ to $0.861$.
At $0.56$ the claim remains not margin-certified; at $0.42$ it becomes
model-insufficient. The \#3 margin no longer passes the Gaussian naive
threshold. These checks do not address unknown template or contamination
dependence within GSM8K.

\textbf{Alternative explanations.}
Under item resampling, the pooled--score difference persists under strict parsing, restricting to the
best-scoring variants, and matching pairs by accuracy
(Appendix Figure~\ref{fig:controls}). These controls use the combined ten-variant
full-FT/LoRA pool: its $0.294$ median is not the $0.459$ median of the
five full-FT variants in Figure~\ref{fig:score-summary}b.
Filtering changes the candidate pool and its observed minimum; it
cannot establish a universal hidden-family floor. Detailed control
protocols and replays are in Appendix~\ref{app:dial}.

\section{Auditing public leaderboards}
\label{sec:audit}

\textbf{Protocol.} Every verdict below is \emph{pair-conditional}:
the audit takes the published pair as fixed and corrects only the
provider's unobserved within-family selection, not which pair became a
published claim (an observable layer; Appendix~\ref{app:masking} calibrates
it). Under independent item resampling, we audit cross-provider claims using official-style
$z$ with plug-in paired SE and score-matched $\hat\rhob$ per pair.
Verdicts are stated at $\rhow^- \in \{0.42, 0.56\}$ (median $0.71$
as an illustrative sensitivity anchor), with selected-$\beta$ and joint
Berger--Boos adjustments for claims with $z > z_{1-\alpha}$ (domain check on the
\emph{cap}). $k = 27$ is the one \emph{documented} multiplicity (on
Arena, not these boards), stated as an assumption alongside each
claim's budget and the winner's public submission count. These are
conditional scenarios; submission counts do not establish private search
size or nonadaptive provenance. Because the
realized comparator is itself a maximum over the board, it is
deterministically protective for the raw margin and calibrated by
replay for the studentized one. We validate the model tail by masking
curated families and replaying selection under real dependence
($B = 10^5$): it is conservative in 12/12 families, and the
min-pairwise column dominates on the audited composite in 12/12
(Appendix~\ref{app:masking}). The one resolved controlled-family
exception is the equal-variance full-FT/Yi cell; the joint
$(\beta^+,R)$ column restores domination there
(Appendix Figure~\ref{fig:validation}).

\textbf{Most adjacent margins are unresolved before selection.}
Across all 394 v1 adjacent cross-provider claims (overlapping, not
independent), the median is $z = 0.17$ and the 95th percentile is
$0.78$ \citep[cf.][]{kotawala2026resolution}. Even the naive
(uncorrected) column certifies only 3 of 394; the Gaussian plug-in column
certifies 0 (33 model-insufficient, 361 refused), Bonferroni 1.
The Gaussian column withdraws support from these three: two become
not margin-certified and one becomes model-insufficient. Bonferroni
still supports the latter. Aggregate decisions are stable,
and the certificate is a per-claim diagnostic for the few claims that
have a nonzero certification question; Appendix Figure~\ref{fig:audit}
shows the $(z,\hat\rhob)$ geometry and verdict counts. \par

\begin{table}[!htbp]
\caption{Conditional top-claim audit at $k=27$, $\alpha=0.05$, $\rhow^-=0.56$.
$\checkmark$: certified under the stated assumptions; $\times$: not margin-certified;
M-I: model-insufficient. Gaussian columns use $R=1$ and independent item resampling; adjusted
columns use asymptotic nuisance bounds. Fixed-family provenance is unverified for all rows.
M-I concerns the margin-correlation domain, not evidence against an advantage.}
\label{tab:audit}
\centering
\small
\begin{tabular}{llccccccc}
\toprule
& & & & \multicolumn{2}{c}{Baselines} &
\multicolumn{3}{c}{Gaussian model$^{R=1}$(.56), by adjustment} \\
\cmidrule(lr){5-6}\cmidrule(lr){7-9}
Board & Rank & $z$ & $\hat\rhob$ & Naive & Bonf & plug-in &
\shortstack{adjusted\\correlation} & \shortstack{adjusted corr.\\and SE} \\
\midrule
v1 & \#1 & 2.61 & .567 & \checkmark & $\times$ & $\times$ & $\times$ & $\times$ \\
v1 & \#2 & 3.26 & .874 & \checkmark & \checkmark & M-I & M-I & M-I \\
v1 & \#3 & 1.81 & .504 & \checkmark & $\times$ & $\times$ & $\times$ & $\times$ \\
v2 & \#1 & 2.44 & .750 & \checkmark & $\times$ & $\times$ & M-I & M-I \\
v2 & \#4 & 3.01 & .434 & \checkmark & \checkmark & \checkmark & \checkmark & \checkmark \\
\bottomrule
\end{tabular}
\end{table}
\vspace{0.4em}

\textbf{Top claims (Table~\ref{tab:audit}).}
The five rows separate three regimes. v1 \#1 is the only
$k$-sensitive verdict: plug-in budgets $\kbar = 13$--17 exceed Bonferroni's 11
at the measured anchors, but the claim is refused from $k=20$ at the
$0.56$ floor (Figure~\ref{fig:audit-cases}; Appendix~\ref{app:audit-extra}).
v1 \#2 is Bonferroni-certified; the Gaussian model abstains because
$0.56<2(0.874)-1=0.748$ violates its nonnegative-margin-correlation domain;
v1 \#3 is an uncorrected-significance case, refused by every correction
at $k \ge 3$. On v2, \#1 is refused or M-I at every adjustment, whereas
the top-five \#4 claim certifies throughout; two identical resubmissions
clear every column at $z=11$, and the top-30 extension certifies $5/30$
naively and $2$ at the curated floor (Appendix~\ref{app:audit-extra}).

\Needspace{4\baselineskip}
\textbf{The rank-1 claim: a verdict that depends on one
unobservable.} Each adjustment of Table~\ref{tab:audit} is a successive
sensitivity check that accounts for one additional selected quantity.
Figure~\ref{fig:sensitivity} separates these nuisance treatments
from additional hidden-family assumptions. The required floor rises
from $0.689$ under plug-in to $0.849$ under joint adjustment,
so once both within-pair selection channels are accounted for,
certifying the \#1 claim at $k = 27$ requires a
\emph{minimum-pairwise} floor of $0.849$ (Figure~\ref{fig:sensitivity}a).
Under item resampling, the largest minima on the curated and controlled composites are
$0.792$ and $0.72$, respectively. Four census families exceed $0.849$
on MMLU-Pro alone; those measurements do not establish a floor for
the audited composite. At the primary floors $0.42$ and $0.56$, no
v1 claim is Gaussian-certified under any adjustment. Selected-$\beta$
and joint adjustments increase model-insufficient counts from $69$ to
$84$ and from $33$ to $40$, respectively. At $0.707$, the one plug-in
certification is lost after nuisance adjustment; at $0.86$, both
certifications remain. These comparisons hold $(\Delta,R)=(0,1)$;
the following stress test adds heterogeneity and variance assumptions.

\begin{figure}[!htbp]
\centering
\includegraphics[width=\linewidth]{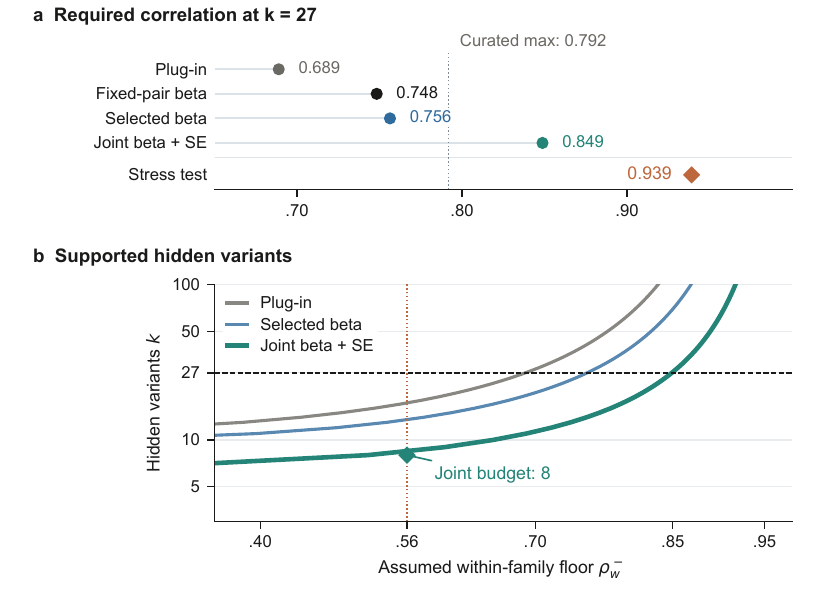}
\caption{\textbf{Assumption sensitivity for v1 claim \#1.}
\textbf{(a)} Required floors at $k=27$; dotted line: largest curated
minimum ($0.792$). The separate stress row adds
$(\Delta,R)=(0.121,1.16)$. \textbf{(b)} Boundaries with nuisance caps
recomputed at every $k$, holding $(\Delta,R)=(0,1)$.
Conditional support lies to the right; adjusted curves are asymptotic
and reported only for $K\le100$. All curves use independent item resampling. Guides mark $k=27$ and
$\rhow^-=0.56$; the marked point gives the joint budget of eight at
that floor. Vertical readings give finite-range budgets; reaching
the upper boundary means $\ge100$, with no extrapolation beyond it.}
\label{fig:sensitivity}
\end{figure}

\textbf{Measured-extreme stress test.} Adding the measured $\beta^+$
excess ($\Delta = 0.121$) tightens the plug-in rank-1 budgets at
$\rhow^- = 0.42/0.56/0.71/0.86$ to $11/13/18/66$ with no
certification flips; adding the measured variance ratio ($R = 1.16$)
moves the $0.42$ anchor outside the model domain and leaves budgets
$12/16/43$ at $0.56/0.71/0.86$. Combining these adjustments pushes the
rank-1 required floor to $0.939$, so every measured sensitivity anchor
refuses ($p = 0.071$ even at $0.86$). The test is calibrated to
observed extremes rather than to any named hidden family, and it
mainly increases abstention (Appendix~\ref{app:audit-extra}).

\section{Discussion and limitations}
\label{sec:discussion}

\textbf{What the certificate reports.}
The Gaussian reference tail is exact with known nuisances; heterogeneous
families receive a conservative bound. Estimated-input audits are plug-in
diagnostics or fixed-$K$ asymptotic adjustments. On dense leaderboards,
ordinary sampling uncertainty already limits most comparisons. For the
remaining margins, a sensitivity curve makes the hidden-family assumptions
explicit without claiming to estimate them.

\textbf{Assumptions and scope.}
Lemma~\ref{lem:robust} handles bounded variance ratios and comparator
heterogeneity, not non-Gaussian tails or unbounded variance asymmetry.
The item-based audit and the subject-cluster sensitivity analysis target
different sampling populations; neither accounts for all benchmark
construction or contamination uncertainty. Hidden-family floors and
heterogeneity caps remain assumptions, not estimates for frontier labs.
The audit is pair-conditional and fixed-$k$; overlapping board counts
are descriptive. Arbitrary selection within a fixed family is allowed,
but generating new candidates through benchmark feedback is outside the
guarantee \citep[cf.][]{roelofs2019meta}. Since private candidate logs
are unavailable, all public-board verdicts remain conditional.

\textbf{What a board should publish.}
\looseness=-1
A leaderboard operator can request a candidate count and a record of
how the candidate set was fixed before benchmark feedback, then publish
a finite-range sensitivity curve with its assumptions and applicability
status (cf.~\citet{singh2025leaderboard}). A declared count alone does not
verify that protocol. With all candidates disclosed, joint max-$t$
inference can reduce the nuisance penalty; independent confirmation data
can instead separate selection from evaluation. These prospective options
complement retrospective sensitivity analysis
(Appendices~\ref{app:svbb} and~\ref{app:toolexample}).

\bibliography{refs}

\begin{thebibliography}{26}
\providecommand{\natexlab}[1]{#1}
\providecommand{\url}[1]{\texttt{#1}}
\expandafter\ifx\csname urlstyle\endcsname\relax
  \providecommand{\doi}[1]{doi: #1}\else
  \providecommand{\doi}{doi: \begingroup \urlstyle{rm}\Url}\fi

\bibitem[Aronow et~al.(2015)Aronow, Samii, and Assenova]{aronow2015cluster}
Peter~M. Aronow, Cyrus Samii, and Valentina~A. Assenova.
\newblock Cluster-robust variance estimation for dyadic data.
\newblock \emph{Political Analysis}, 23\penalty0 (4):\penalty0 564--577, 2015.
\newblock \doi{10.1093/pan/mpv018}.

\bibitem[Bakshi et~al.(2026)Bakshi, Gao, Gao, and
  Panigrahi]{bakshi2026flexible}
Soham Bakshi, Lingjun Gao, Zijun Gao, and Snigdha Panigrahi.
\newblock Flexible inference for winners with conditional validity.
\newblock \emph{arXiv preprint arXiv:2607.18545}, 2026.
\newblock \doi{10.48550/arXiv.2607.18545}.

\bibitem[Berger \& Boos(1994)Berger and Boos]{berger1994p}
Roger~L. Berger and Dennis~D. Boos.
\newblock P values maximized over a confidence set for the nuisance parameter.
\newblock \emph{Journal of the American Statistical Association}, 89\penalty0
  (427):\penalty0 1012--1016, 1994.
\newblock \doi{10.1080/01621459.1994.10476836}.

\bibitem[Blum \& Hardt(2015)Blum and Hardt]{blum2015ladder}
Avrim Blum and Moritz Hardt.
\newblock The {L}adder: A reliable leaderboard for machine learning
  competitions.
\newblock In \emph{Proceedings of the 32nd International Conference on Machine
  Learning}, volume~37 of \emph{Proceedings of Machine Learning Research}, pp.\
   1006--1014. PMLR, 2015.
\newblock arXiv:1502.04585.

\bibitem[Bowyer et~al.(2025)Bowyer, Aitchison, and Ivanova]{bowyer2025position}
Sam Bowyer, Laurence Aitchison, and Desi~R. Ivanova.
\newblock Position: Don't use the {CLT} in {LLM} evals with fewer than a few
  hundred datapoints.
\newblock In \emph{Proceedings of the 42nd International Conference on Machine
  Learning}, volume 267 of \emph{Proceedings of Machine Learning Research},
  pp.\  81143--81184. PMLR, 2025.
\newblock URL \url{https://proceedings.mlr.press/v267/bowyer25a.html}.
\newblock arXiv:2503.01747.

\bibitem[Dunnett(1955)]{dunnett1955multiple}
Charles~W. Dunnett.
\newblock A multiple comparison procedure for comparing several treatments with
  a control.
\newblock \emph{Journal of the American Statistical Association}, 50\penalty0
  (272):\penalty0 1096--1121, 1955.
\newblock \doi{10.1080/01621459.1955.10501294}.

\bibitem[Dwork et~al.(2015)Dwork, Feldman, Hardt, Pitassi, Reingold, and
  Roth]{dwork2015reusable}
Cynthia Dwork, Vitaly Feldman, Moritz Hardt, Toniann Pitassi, Omer Reingold,
  and Aaron Roth.
\newblock The reusable holdout: Preserving validity in adaptive data analysis.
\newblock \emph{Science}, 349\penalty0 (6248):\penalty0 636--638, 2015.
\newblock \doi{10.1126/science.aaa9375}.

\bibitem[Hardt(2017)]{hardt2017climbing}
Moritz Hardt.
\newblock Climbing a shaky ladder: Better adaptive risk estimation.
\newblock \emph{arXiv preprint arXiv:1706.02733}, 2017.
\newblock \doi{10.48550/arXiv.1706.02733}.
\newblock URL \url{https://arxiv.org/abs/1706.02733}.

\bibitem[Hung \& Fithian(2019)Hung and Fithian]{hung2019rank}
Kenneth Hung and William Fithian.
\newblock Rank verification for exponential families.
\newblock \emph{The Annals of Statistics}, 47\penalty0 (2):\penalty0 758--782,
  2019.
\newblock \doi{10.1214/17-AOS1634}.

\bibitem[Kim et~al.(2025)Kim, Garg, Peng, and Garg]{kim2025correlated}
Elliot~Myunghoon Kim, Avi Garg, Kenny Peng, and Nikhil Garg.
\newblock Correlated errors in large language models.
\newblock In \emph{Proceedings of the 42nd International Conference on Machine
  Learning}, volume 267 of \emph{Proceedings of Machine Learning Research},
  pp.\  30038--30066. PMLR, 2025.
\newblock URL \url{https://proceedings.mlr.press/v267/kim25e.html}.
\newblock arXiv:2506.07962.

\bibitem[Kotawala(2026)]{kotawala2026resolution}
Anany Kotawala.
\newblock Resolution diagnostics for paired {LLM} evaluation.
\newblock \emph{arXiv preprint arXiv:2605.30315}, 2026.
\newblock \doi{10.48550/arXiv.2605.30315}.
\newblock ICML 2026 Workshop on Hypothesis Testing.

\bibitem[Lee et~al.(2016)Lee, Sun, Sun, and Taylor]{lee2016exact}
Jason~D. Lee, Dennis~L. Sun, Yuekai Sun, and Jonathan~E. Taylor.
\newblock Exact post-selection inference, with application to the lasso.
\newblock \emph{The Annals of Statistics}, 44\penalty0 (3):\penalty0 907--927,
  2016.
\newblock \doi{10.1214/15-AOS1371}.

\bibitem[McCloskey \& Michaillat(2026)McCloskey and
  Michaillat]{mccloskey2024critical}
Adam McCloskey and Pascal Michaillat.
\newblock Critical values robust to p-hacking.
\newblock \emph{Review of Economics and Statistics}, 108\penalty0 (4):\penalty0
  1085--1097, 2026.
\newblock \doi{10.1162/rest_a_01456}.
\newblock arXiv:2005.04141.

\bibitem[Messing(2026)]{messing2026hidden}
Solomon Messing.
\newblock Hidden measurement error in {LLM} pipelines distorts annotation,
  evaluation, and benchmarking.
\newblock \emph{arXiv preprint arXiv:2604.11581}, 2026.
\newblock \doi{10.48550/arXiv.2604.11581}.
\newblock URL \url{https://arxiv.org/abs/2604.11581}.

\bibitem[Miller(2024)]{miller2024adding}
Evan Miller.
\newblock Adding error bars to evals: A statistical approach to language model
  evaluations.
\newblock \emph{arXiv preprint arXiv:2411.00640}, 2024.
\newblock \doi{10.48550/arXiv.2411.00640}.

\bibitem[Min et~al.(2025)Min, Pang, Du, Liu, Cheng, and Lin]{min2025rigging}
Rui Min, Tianyu Pang, Chao Du, Qian Liu, Minhao Cheng, and Min Lin.
\newblock Improving your model ranking on chatbot arena by vote rigging.
\newblock In \emph{Proceedings of the 42nd International Conference on Machine
  Learning}, volume 267 of \emph{Proceedings of Machine Learning Research},
  pp.\  44252--44271. PMLR, 2025.
\newblock URL \url{https://proceedings.mlr.press/v267/min25a.html}.
\newblock arXiv:2501.17858.

\bibitem[Neufeld et~al.(2026)Neufeld, Perry, and Witten]{neufeld2026selective}
Anna Neufeld, Ronan Perry, and Daniela Witten.
\newblock Inference conditional on selection: A review.
\newblock \emph{arXiv preprint arXiv:2604.09779}, 2026.
\newblock \doi{10.48550/arXiv.2604.09779}.
\newblock URL \url{https://arxiv.org/abs/2604.09779}.

\bibitem[Polo et~al.(2024{\natexlab{a}})Polo, Weber, Choshen, Sun, Xu, and
  Yurochkin]{polo2024tinybenchmarks}
Felipe~Maia Polo, Lucas Weber, Leshem Choshen, Yuekai Sun, Gongjun Xu, and
  Mikhail Yurochkin.
\newblock tiny{B}enchmarks: Evaluating {LLM}s with fewer examples.
\newblock In \emph{Proceedings of the 41st International Conference on Machine
  Learning}, volume 235 of \emph{Proceedings of Machine Learning Research},
  pp.\  34303--34326. PMLR, 2024{\natexlab{a}}.
\newblock URL \url{https://proceedings.mlr.press/v235/maia-polo24a.html}.

\bibitem[Polo et~al.(2024{\natexlab{b}})Polo, Xu, Weber, Silva, Bhardwaj,
  Choshen, de~Oliveira, Sun, and Yurochkin]{promteval2024}
Felipe~Maia Polo, Ronald Xu, Lucas Weber, M{\'i}rian Silva, Onkar Bhardwaj,
  Leshem Choshen, Allysson Flavio~Melo de~Oliveira, Yuekai Sun, and Mikhail
  Yurochkin.
\newblock Efficient multi-prompt evaluation of {LLM}s.
\newblock In \emph{Advances in Neural Information Processing Systems
  (NeurIPS)}, volume~37, pp.\  22483--22512. Curran Associates, Inc.,
  2024{\natexlab{b}}.
\newblock \doi{10.52202/079017-0707}.
\newblock arXiv:2405.17202.

\bibitem[Roelofs et~al.(2019)Roelofs, Shankar, Recht, Fridovich-Keil, Hardt,
  Miller, and Schmidt]{roelofs2019meta}
Rebecca Roelofs, Vaishaal Shankar, Benjamin Recht, Sara Fridovich-Keil, Moritz
  Hardt, John Miller, and Ludwig Schmidt.
\newblock A meta-analysis of overfitting in machine learning.
\newblock In \emph{Advances in Neural Information Processing Systems
  (NeurIPS)}, volume~32, pp.\  9179--9189, 2019.
\newblock URL
  \url{https://proceedings.neurips.cc/paper_files/paper/2019/hash/ee39e503b6bedf0c98c388b7e8589aca-Abstract.html}.

\bibitem[Rosenthal(1979)]{rosenthal1979file}
Robert Rosenthal.
\newblock The ``file drawer problem'' and tolerance for null results.
\newblock \emph{Psychological Bulletin}, 86\penalty0 (3):\penalty0 638--641,
  1979.
\newblock \doi{10.1037/0033-2909.86.3.638}.

\bibitem[Singh et~al.(2025)Singh, Nan, Wang, Dsouza, Kapoor, {\"U}st{\"u}n,
  Koyejo, Deng, Longpre, Smith, Ermis, Fadaee, and
  Hooker]{singh2025leaderboard}
Shivalika Singh, Yiyang Nan, Alex Wang, Daniel Dsouza, Sayash Kapoor, Ahmet
  {\"U}st{\"u}n, Sanmi Koyejo, Yuntian Deng, Shayne Longpre, Noah~A. Smith,
  Beyza Ermis, Marzieh Fadaee, and Sara Hooker.
\newblock The leaderboard illusion.
\newblock In \emph{Advances in Neural Information Processing Systems},
  volume~38, pp.\  86910--86964. Curran Associates, Inc., 2025.
\newblock \doi{10.52202/085713-2620}.

\bibitem[Sood(2025)]{sood2025powerful}
Anav Sood.
\newblock Powerful rank verification for multivariate {G}aussian data with any
  covariance structure.
\newblock \emph{arXiv preprint arXiv:2503.01065}, 2025.
\newblock \doi{10.48550/arXiv.2503.01065}.

\bibitem[Xu et~al.(2026)Xu, Zhang, Sun, Zhou, Cao, and Aggarwal]{xu2026siren}
Yang Xu, Jiefu Zhang, Haixiang Sun, Zihan Zhou, Tianyu Cao, and Vaneet
  Aggarwal.
\newblock Towards reliable {LLM} evaluation: Correcting the winner's curse in
  adaptive benchmarking.
\newblock \emph{arXiv preprint arXiv:2605.05973}, 2026.
\newblock \doi{10.48550/arXiv.2605.05973}.
\newblock URL \url{https://arxiv.org/abs/2605.05973}.

\bibitem[Zrnic \& Fithian(2024{\natexlab{a}})Zrnic and
  Fithian]{zrnic2024locally}
Tijana Zrnic and William Fithian.
\newblock Locally simultaneous inference.
\newblock \emph{The Annals of Statistics}, 52\penalty0 (3):\penalty0
  1227--1253, 2024{\natexlab{a}}.
\newblock \doi{10.1214/24-AOS2391}.

\bibitem[Zrnic \& Fithian(2024{\natexlab{b}})Zrnic and Fithian]{zrnic2024zoom}
Tijana Zrnic and William Fithian.
\newblock A flexible defense against the winner's curse.
\newblock \emph{arXiv preprint arXiv:2411.18569}, 2024{\natexlab{b}}.
\newblock \doi{10.48550/arXiv.2411.18569}.

\end{thebibliography}
\bibliographystyle{iclr2027_conference}

\appendix

\section{Ethics, reproducibility, and AI use}
\label{app:statements}

\textbf{Ethics.}
The audit evaluates \emph{public claims} (published rankings and margins),
not private conduct. A \emph{not margin-certified} verdict states that
a published margin is statistically compatible with the provider-level
null under a stated selection model at a stated $(k,\rhow)$. It makes
no claim about whether any provider performed hidden testing. No
verdict is labeled ``refuted,'' by construction. All data are public
releases (per-item leaderboard matrices, submission metadata, battle
records); no user data or private model outputs are accessed.

\textbf{Reproducibility.}
The supplementary material provides the \texttt{selective-evals}
package, analysis and figure-generation scripts, and machine-readable
results. The package includes the exact quadrature and certificate,
with tests for Monte Carlo agreement, boundary laws, monotonicity,
inversion consistency, and the rank-1 sensitivity curve. Figure generation
uses the included numerical results; rerunning the experiments requires
the corresponding public datasets or model-evaluation outputs.
Appendix~\ref{app:grid}--\ref{app:masking} describe the experimental protocols.

\textbf{AI use.}
Large language models were used substantially in preparing this work:
for code development and experiment orchestration, literature search,
adversarial review simulation, proof drafting and checking, and
language editing. All mathematical arguments, empirical analyses, and
reported numbers were independently examined and verified by the
authors, who take full responsibility for the content of the paper.

\section{Proofs}
\label{app:proofs}

Throughout, write $\delta_v = \mu_{pv} - \mu_q$ and
$s = \sqrt{1-\rhow}$. All proofs run at the \emph{margin level}: on the
full domain $\rhow \in [2\rhob - 1,\, 1]$, the margin vector is the
exchangeable Gaussian
\begin{equation}
\label{eq:margin-decomp}
D_v \;=\; \delta_v \;+\; s\,\varepsilon_v \;+\; G, \qquad
G \sim N(0,\; 1+\rhow-2\rhob) \text{ independent of }
(\varepsilon_1,\dots,\varepsilon_k),
\end{equation}
which has the moments of \S\ref{sec:setup}
($\operatorname{Var} = 2-2\rhob$, $\operatorname{Cov} = 1+\rhow-2\rhob$),
with $M = \max_{v \le k} D_v$. For $\rhow \ge \rhob$ this is exactly
the margin law of the score representation \eqref{eq:twoblock}: take
$G = \sqrt{\rhow-\rhob}\, F_p - \sqrt{1-\rhob}\, \varepsilon_q$ and
note the common factor $Z_0$ cancels in every margin. For
$2\rhob - 1 \le \rhow < \rhob$ the score-level factorization does not
exist, but \eqref{eq:margin-decomp} remains the unique exchangeable
Gaussian with those moments --- the object the certificate actually
tests, per \S\ref{sec:setup}. $\Hprov$ is the constraint
$\delta_v \le 0$ for all $v$.

\subsection[Proof of Lemma 1]{Proof of Lemma~\ref{lem:lfc} (least-favorability of equal means)}

Condition on $G = g$. For $\rhow < 1$ ($s>0$), by independence of the
$\varepsilon_v$,
\[
\Pr\bigl(M > m \,\big|\, G=g\bigr)
 \;=\; 1 \;-\; \prod_{v=1}^{k} \Phi\!\Bigl(\frac{m - g - \delta_v}{s}\Bigr).
\]
Each factor is strictly decreasing in $\delta_v$, hence the conditional tail
is nondecreasing in each $\delta_v$ separately, for every $g$ and every
$m$. Integrating over the law of $G$ preserves this coordinatewise
monotonicity, so over the orthant $\{\delta \le 0\}$ the tail is maximized at
$\delta = 0$, which is the equal-means configuration
$\mu_{p1}=\cdots=\mu_{pk}=\mu_q$. For $\rhow = 1$ ($s=0$) the conditional
probability is the indicator $\mathbf{1}\{\max_v \delta_v > m - g\}$, again
nondecreasing in each $\delta_v$, and the same argument applies. \qed

\emph{Remark.} The proof uses only that $q$ is a fixed comparator: the
selection $\max_v$ is over coordinates whose idiosyncratic noises are
conditionally i.i.d.\ given the shared factors. No coupling with a selected
runner-up is needed --- that difficulty belongs to estimands in which $q$ is
itself chosen by the selection event, which is outside $\Hprov$.

\subsection[Proof of Lemma 2]{Proof of Lemma~\ref{lem:mono} (monotonicity in $\rhow$)}

At the least-favorable configuration $\delta = 0$, consider the margin vector
$D_v = S_{pv} - S_q$, $v \le k$. It is centered Gaussian with
\begin{align*}
\operatorname{Var}(D_v)
&= (\rhow-\rhob) + (1-\rhow) + (1-\rhob)
 = 2(1-\rhob),\\
\operatorname{Cov}(D_u, D_v)
&= 1+\rhow-2\rhob \quad (u \ne v),
\end{align*}
so the variances are \emph{free of $\rhow$} while every off-diagonal
covariance is strictly increasing in $\rhow$; the implied correlation
$(1+\rhow-2\rhob)/(2-2\rhob)$ ranges over $[0, 1]$ for
$\rhow \in [2\rhob-1, 1]$ (and over $[\tfrac12, 1]$ on the narrower
factor-representation range $[\rhob, 1]$). Let $\rhow' > \rhow$ in
$[2\rhob-1, 1]$ and let $D, D'$ be the corresponding margin vectors ---
both valid exchangeable Gaussians on this domain, whether or not the
factor form \eqref{eq:twoblock} exists for them. By Slepian's inequality (equal means,
equal variances, $\operatorname{Cov}(D'_u,D'_v) \ge
\operatorname{Cov}(D_u,D_v)$ entrywise),
\[
\Pr\Bigl(\max_{v\le k} D'_v \le m\Bigr) \;\ge\;
\Pr\Bigl(\max_{v\le k} D_v \le m\Bigr)
\qquad \text{for all } m,
\]
i.e., $\Pr(M > m)$ is nonincreasing in $\rhow$. \qed

\emph{Remark (why this is the right monotonicity).} On the audited $z$
scale the standardization $\sqrt{2(1-\rhob)}$ is also free of $\rhow$, so
the same statement holds for $p_k(z;\rhow,\rhob)$, and reporting the
certificate at a lower bound $\rhow^-$ is conservative for any true
$\rhow \ge \rhow^-$.

\subsection[Proof of Proposition 1]{Proof of Proposition~\ref{prop:rhobmono} (monotonicity in
the comparator correlation)}
\label{app:rhobmono}

The certificate's statistic is standardized, and that changes what can
be proved. Divide the margins by their common standard deviation:
$X_v = D_v/\sqrt{2(1-\rhob)}$ has unit variance, and for $u \ne v$
\begin{equation}
\label{eq:rdef}
r(\rhow,\rhob) \;=\; \operatorname{Corr}(X_u, X_v)
\;=\; \frac{1+\rhow-2\rhob}{2(1-\rhob)}
\;=\; 1 - \frac{1-\rhow}{2(1-\rhob)} .
\end{equation}
The standardized vector is therefore \emph{equicorrelated with unit
variances}, so the tail depends on $(\rhow, \rhob)$ only through the
scalar $r$:
\[
p_k(z;\rhow,\rhob) \;=\; Q_k(z; r), \qquad
Q_k(z;r) = \mathbb{E}_H\Bigl[1 - \Phi\bigl((z-\sqrt{r}H)/
\sqrt{1-r}\bigr)^k\Bigr] .
\]
The pair $(\rhow,\rhob)$ is thus over-parameterized on the $z$ scale;
$r = 0$ recovers the \v{S}id\'ak law $1-\Phi(z)^k$ and $r = 1$ the
$k = 1$ law $1-\Phi(z)$. Slepian's inequality applied to two
equicorrelated unit-variance Gaussians gives $Q_k(z;\cdot)$
nonincreasing in $r$, while \eqref{eq:rdef} gives
\[
\frac{\partial r}{\partial \rhow} = \frac{1}{2(1-\rhob)} \ge 0,
\qquad
\frac{\partial r}{\partial \rhob} = -\,\frac{1-\rhow}{2(1-\rhob)^2}
\le 0 .
\]
Composing, $p_k$ is nonincreasing in $\rhow$ (Lemma~\ref{lem:mono}
again) and \emph{nondecreasing} in $\rhob$. It is constant in $\rhob$
exactly when $k = 1$ or $\rhow = 1$, and strictly increasing in the
interior otherwise, since Plackett's reduction formula makes
$\partial Q_k/\partial r$ strictly negative there. \qed

\emph{Remark (which scale the claim lives on).} At a \emph{fixed raw}
margin $m$ no such statement holds: changing $\rhob$ also changes the
margin variance, so the threshold moves in standard-deviation units and
Slepian's equal-variance hypothesis fails. The proposition is about the
standardized statistic $z = M/\sqrt{2(1-\rhob)}$ that the certificate
actually uses; raw-margin and standardized-margin monotonicity are
different statements, and only the latter is claimed.

\emph{Heterogeneous extension (the floor is a minimum, not a mean).} Let
the within-family block have arbitrary pairwise correlations $\rho_{uv}$
with unit variances, and let $\rhow^- = \min_{u \ne v} \rho_{uv} \ge
2\rhob - 1$ (the domain condition; both margin Gaussians below are then
valid). The margin vector then has
$\operatorname{Var}(D_v) = 2(1-\rhob)$ --- still free of the within-block
structure --- and $\operatorname{Cov}(D_u, D_v) = \rho_{uv} + 1 - 2\rhob$.
Setting $\rhow^- = \min_{u \ne v} \rho_{uv}$ gives entrywise covariance
domination of the heterogeneous vector over the equicorrelated one at
$\rhow^-$, with equal variances, so Slepian applies exactly as above:
$\Pr(M > m) \le \Pr(M_{\rhow^-} > m)$ for all $m$. Certification at
the minimum pairwise correlation is therefore valid for any
heterogeneous \emph{within-family} structure --- and the mean pairwise
correlation is \emph{not} a valid reporting point, which is why the
paper's floor is $\min_{\text{families}} \min_{u \ne v} \hat\rho_{uv}$.
One assumption is doing quiet work here: the comparator correlation is
held homogeneous ($\operatorname{Cov}(S_{pv}, S_q) = \rhob$ for every
$v$). If it is heterogeneous, $\operatorname{Var}(D_v) = 2 - 2\beta_v$
varies, Slepian's equal-variance hypothesis fails on the raw margins,
and this extension does not apply directly --- the mechanism
Appendix~\ref{app:masking} identifies behind the mean-column calibration
failures. Lemma~\ref{lem:robust}, proved next, repairs exactly this by
standardizing the margins first.

\subsection[Proof of Lemma 3]{Proof of Lemma~\ref{lem:robust} (comparator heterogeneity,
unequal variances, arbitrary selection)}
\label{app:robustproof}

Write $D_v = S_{pv} - S_q$, $\sigma_{D_v}^2 = \operatorname{Var}(D_v)
= \sigma_v^2 + \sigma_q^2 - 2\beta_v \sigma_v \sigma_q$, and let
$X_v = (D_v - \mathbb{E}D_v)/\sigma_{D_v}$ be the \emph{standardized}
margins: exactly standard Gaussian, with correlations
\[
r_{uv} \;=\; \operatorname{Corr}(D_u, D_v).
\]
\emph{Step 1 (selection bound).} For any selection rule $W$ taking
values in $\{1,\dots,k\}$,
$D_W/\sigma_{D_W} \le \max_{v \le k} D_v/\sigma_{D_v}$ pointwise, so
under $\Hprov$ ($\mathbb{E}D_v \le 0$),
\[
\Pr\!\left(\frac{D_W}{\sigma_{D_W}} > z\right)
\;\le\; \Pr\!\left(\max_{v\le k} \frac{D_v}{\sigma_{D_v}} > z\right)
\;\le\; \Pr\!\left(\max_{v\le k} X_v > z\right),
\]
the second inequality by the mean shift (Lemma~\ref{lem:lfc}'s coupling
argument, which needs no exchangeability). No assumption on \emph{how}
$W$ is chosen is used: selection on raw scores, on standardized scores,
or adversarially are all covered.

\emph{Step 2 (worst-case correlation floor).} With unit score
variances,
\[
r_{uv} \;=\; \frac{1 + \rho_{uv} - \beta_u - \beta_v}
{2\sqrt{(1-\beta_u)(1-\beta_v)}}
\;\ge\; f(\beta_u, \beta_v)
:= \frac{1 + \rhow^- - \beta_u - \beta_v}
{2\sqrt{(1-\beta_u)(1-\beta_v)}} .
\]
Substituting $\ell = 1-\beta^+$, $x = (1-\beta_u)/\ell \ge 1$,
$y = (1-\beta_v)/\ell \ge 1$, and $d = (1-\rhow^-)/\ell \ge 0$ gives
$f = (x + y - d)/(2\sqrt{xy})$ and, at the corner
$\beta_u = \beta_v = \beta^+$, $f = 1 - d/2$. The difference is the
identity
\[
f(\beta_u, \beta_v) - f(\beta^+, \beta^+)
\;=\; \frac{(\sqrt{x}-\sqrt{y})^2 + d\,(\sqrt{xy}-1)}{2\sqrt{xy}}
\;\ge\; 0,
\]
both terms being nonnegative for $x, y \ge 1$, so the global minimum
over any box $\beta_u, \beta_v \le \beta^+$ is exactly
\[
\min f \;=\; 1 - \frac{d}{2}
\;=\; \frac{1 + \rhow^- - 2\beta^+}{2 - 2\beta^+} \;=:\; r^- ,
\]
the homogeneous margin correlation at $(\rhow^-, \beta^+)$ ---
independent of $\beta^-$. The claim is about the \emph{margin} law:
$r^-$ is attained at that corner and the equicorrelated margin vector
at $r^-$ furnishes the upper bound. A full score-level covariance with
those parameters need not itself be feasible unless
$1 + (k-1)\rhow^- - k(\beta^+)^2 \ge 0$; the bound does not need it,
because the proof never leaves the margin scale.

\emph{Step 3 (Slepian on standardized margins).} The $X_v$ have equal
(unit) variances and equal (zero) means, so Slepian's inequality
compares them to the equicorrelated vector at $r^-$ whenever
$r^- \le r_{uv}$ entrywise (Step 2) and $r^- \ge 0$ (the
model-insufficient boundary $\rhow^- \ge 2\beta^+ - 1$, now stated at
$\beta^+$):
\[
\Pr\!\left(\max_v X_v > z\right) \;\le\; T(z; k, \rhow^-, \beta^+).
\]
Chaining Steps 1--3 proves the lemma. \qed

\begin{proposition}[Unconditional validity for the submitted model,
true standardization]
\label{prop:shipped}
Under the assumptions of Lemma~\ref{lem:robust}, let $W$ be the
submitted variant, chosen by any rule, and let
$Z_W = D_W/\sigma_{D_W}$ be its margin standardized by the \emph{true}
margin standard deviation. For every configuration of true means,
\[
\Pr\bigl(Z_W > c_\alpha(k;\rhow^-,\beta^+) \ \text{ and } \
\mu_{pW} \le \mu_q\bigr) \;\le\; \alpha .
\]
The critical value that tests $\Hprov$ therefore also bounds the
unconditional probability of certifying a submitted model that is not
truly better than the comparator. The statement is about $\sigma_{D_W}$,
not the estimated paired SE the audit plugs in; see the remark after the
proof.
\end{proposition}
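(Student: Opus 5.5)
The plan is to reduce the event to a maximum over only those variants that are not truly better, and then reuse Steps~1--3 of Lemma~\ref{lem:robust} on that sub-family.

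Write $B = \{v \le k : \mu_{pv} \le \mu_q\}$ for the (deterministic, since means are fixed) set of "bad" variants. If $B = \emptyset$ the event is empty and there is nothing to prove. Otherwise, on the event $\{Z_W > c \text{ and } \mu_{pW} \le \mu_q\}$ we have $W \in B$. Hence, pointwise,
\[
Z_W\,\mathbf{1}\{W \in B\} \le \max_{v \in B} Z_v ,
\]
and the event is contained in $\{\max_{v\in B} Z_v > c\}$, where $c = c_\alpha(k;\rhow^-,\beta^+)$. The key point is that this containment holds for any selection rule $W$, including rules that depend on all $k$ scores or on private proxies. No conditioning on the selection event is ever needed.

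Next I would bound $\Pr(\max_{v\in B} Z_v > c)$. Each $Z_v = X_v + \mathbb{E}D_v/\sigma_{D_v}$ with $\mathbb{E}D_v \le 0$ for $v\in B$. So by the coordinatewise mean-shift argument of Lemma~\ref{lem:lfc} (as used in Step~1 of Lemma~\ref{lem:robust}), $\max_{v\in B} Z_v \le \max_{v \in B} X_v$ pointwise. The standardized vector $(X_v)_{v\in B}$ has unit variances and correlations $r_{uv} \ge r^-$, by Step~2 of Lemma~\ref{lem:robust}, since the box constraints hold for every subset. We also have $r^- \ge 0$ by the domain condition. Slepian (Step~3) then gives
\[
\Pr\Bigl(\max_{v\in B} X_v > c\Bigr) \le Q_{|B|}(c; r^-).
\]

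The final step is monotonicity in the number of coordinates: $Q_{|B|}(c;r^-) \le Q_k(c;r^-)$ because $|B| \le k$. This is immediate from the one-factor form, since $1-\Phi(\cdot)^j$ is nondecreasing in $j$. Alternatively, the maximum over a subset of the equicorrelated $k$-vector is dominated by the full maximum. By definition of $c_\alpha$, $Q_k(c;r^-) \le \alpha$, which completes the chain.

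The step I expect to be the main obstacle, or at least the one needing care, is the reduction to $B$. It must be checked that $B$ is nonrandom, which holds because the true means are parameters and not estimates. It must also be checked that the correlation floor $r^-$ and the variance box of Lemma~\ref{lem:robust} are inherited by every sub-family, which holds because the bounds are stated pairwise. A remark after the proof would note that the argument uses the true $\sigma_{D_W}$ throughout. With an estimated SE, the random denominator breaks the pointwise comparison, and only the asymptotic adjustments of the main text apply.
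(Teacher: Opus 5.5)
Your proof is correct and follows the paper's argument: restrict to the deterministic set of not-truly-better variants, bound $Z_W$ by $\max_{v\in B} Z_v$ on the error event, drop the nonpositive mean shifts, and apply Steps~2--3 of Lemma~\ref{lem:robust}. The only differences are these: you apply Slepian to the subfamily and then use $Q_{|B|}\le Q_k$, whereas the paper first enlarges the index set to all $k$ variants and applies Slepian once; and your display $Z_W\mathbf{1}\{W\in B\}\le\max_{v\in B}Z_v$ fails when $W\notin B$ and the right side is negative, so state the bound only on $\{W\in B\}$, which is all the event containment uses.
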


\emph{Proof of Proposition~\ref{prop:shipped}.} Let
$N = \{v \le k : \mu_{pv} \le \mu_q\}$ be the variants that are not
truly better than the comparator. On the error event
$\{Z_W > c_\alpha\} \cap \{\mu_{pW} \le \mu_q\}$ the submitted
variant satisfies $W \in N$, so $N \ne \emptyset$ and
\[
Z_W \;=\; \frac{D_W}{\sigma_{D_W}}
\;\le\; \max_{v \in N} \frac{D_v}{\sigma_{D_v}} ,
\]
the inequality holding pointwise because $W$ is one of the indices in
$N$ and both sides use the \emph{same} (true) standardization. It would
\emph{not} hold with an estimated denominator on the left: a small
$\widehat{\operatorname{sd}}(D_W)$ can make the plug-in statistic
exceed every true-standardized margin. Every $v \in N$ has $\mathbb{E} D_v \le 0$, so the mean-shift
coupling of Lemma~\ref{lem:lfc} bounds the law of that maximum by its
centered counterpart, and enlarging the index set from $N$ to
$\{1,\dots,k\}$ can only increase a maximum. Steps 2--3 of the
preceding proof then apply verbatim to the centered standardized
margins, giving
$\Pr(\cdot) \le T(c_\alpha; k, \rhow^-, \beta^+) = \alpha$. \qed

\emph{Remark (what this does and does not give).} The bound is
\emph{unconditional}: it averages over which variant gets submitted.
It is not a statement conditional on the realized selection event, and
it does not identify \emph{which} variant is superior when the
provider-level null is rejected. It is also a \emph{known-$\sigma$}
theorem. The audit substitutes an estimated paired SE, which makes the
audited procedure a plug-in whose finite-sample behavior we assess
empirically (Appendix~\ref{app:masking}) rather than a case of this
proposition. ``Any selection rule'' permits arbitrary choice of
\emph{how $W$ is chosen}; it says nothing about a random denominator.
Note also that the guarantee requires no additional correction --- it is the same critical value --- because the null
subset $N$ is at most as large as the full family, and the
least-favorable configuration for a subset is dominated by that for the
whole.

\emph{Remark (bounded variance ratios).} If score variances are not
equal but bounded --- $\sigma_v/\sigma_q \in [1/R, R]$ --- Steps 1 and
3 are unchanged and only the correlation floor of Step 2 moves: with
$t_v = \sigma_v/\sigma_q$,
\[
r_{uv} \;=\;
\frac{\rho_{uv} t_u t_v + 1 - \beta_u t_u - \beta_v t_v}
{\sqrt{(t_u^2 + 1 - 2\beta_u t_u)(t_v^2 + 1 - 2\beta_v t_v)}},
\]
and $r^-(R)$ is the minimum over the compact box
$(\beta_u, \beta_v, t_u, t_v) \in [\beta^-,\beta^+]^2 \times
[1/R, R]^2$. An overstated floor would be anti-conservative, so a mesh
search will not do: the largest gradient \emph{seen on a mesh} is not a
bound on the gradient over the continuum, and a mesh minimum is an
upper bound on the minimum, not a lower one. We instead compute $r^-(R)$ by interval arithmetic with branch and
bound (\texttt{certified\_rmin}): every subbox has a rigorous
enclosure of $r$, subboxes whose lower enclosure exceeds the incumbent
are discarded, and what we report is the proven lower bound of the
remaining subboxes, outward-rounded by $10^{-9}$. This is
machine-checkable and needs no smoothness assumption. The
$\beta$-domain is the full $[-1, \beta^+]$, so no lower bound on the
hidden comparator correlations is smuggled in. At $R = 1$ the identity
above gives the answer in closed form; at $R > 1$ we report a certified
enclosure rather than an argmin claim, since branch and bound proves
$L \le \inf r \le U$ and finds the corner attaining $U$, which is not
quite a proof that the corner is the minimiser. For the audited rank-1
pair the enclosures are $r^- \in [0.4469138, 0.4469148]$ ($R = 1.1$),
$[0.4252691, 0.4252701]$ ($R = 1.16$) and $[0.3995626, 0.3995636]$
($R = 1.25$) --- width $10^{-6}$, the stopping tolerance --- against the
equal-variance value $0.4919$. The audit uses the certified lower
endpoints, and the run log (enclosure, remaining boxes, tolerance)
ships as \texttt{certified\_rmin\_log}. The effect is modest at realistic ratios: for the
audited rank-1 pair ($\rhow^- = 0.56$, $\beta^+ = 0.567$) the floor
moves from $0.492$ ($R = 1$) to $0.447$ ($R = 1.1$) and $0.400$
($R = 1.25$); the measured score-SD ratios $\sigma_v/\sigma_q$ in the
curated families have median worst-case $R = 1.04$ and maximum $1.16$
(\texttt{sigma\_ratios}), comfortably inside that range. Unbounded
variance asymmetry, by contrast, can push $r_{uv}$ below the $R = 1$
floor --- the bound is not variance-free, which is why $R$ is a stated
axis rather than an omitted one.

\emph{Remark (finite-$k$ margin domain).} The exchangeable margin
covariance (diagonal $a = 2-2\rhob$, off-diagonal $b = 1+\rhow-2\rhob$)
for $k\ge2$ is positive semidefinite iff $b \le a$ and $a + (k-1)b \ge 0$, i.e.
\[
2\rhob - 1 - \frac{2(1-\rhob)}{k-1} \;\le\; \rhow \;\le\; 1 ,
\]
which is strictly wider than the $[2\rhob-1, 1]$ on which the
certificate operates. On the gap --- negative equicorrelation $b < 0$
--- the selected-max tail \emph{exceeds} the \v{S}id\'ak limit and the
one-factor quadrature \eqref{eq:tail} does not represent it; floors
there are declared \emph{model-insufficient} rather than evaluated,
which is the conservative direction for the report but not a
certificate. A score-level representation with unit variances
additionally requires $1 + (k-1)\rhow - k\rhob^2 \ge 0$; none of the
margin-level results above need it.

\subsection{Selection-valid nuisance inference}
\label{app:svbb}

\begin{table}[!htbp]
\caption{Inference contract: each adjustment accounts for one more unobservable.}
\label{tab:contract}
\begin{center}
\footnotesize
\begin{tabular}{lll}
\toprule
adjustment & inputs and assumptions & guarantee \\
\midrule
reference tail & exchangeable Gaussian; known inputs & exact \\
heterogeneous margins & $(k, \rhow^-, \beta^+, R)$; true $\tau$ & conservative bound \\
selected-$\beta$ IF-SVBB & pair items; Fisher-$z$ UCB &
asymptotic bound; plug-in SE \\
joint $(\beta_W, \tau_W)$ & raw margin; joint IF region &
fixed-$K$ asymptotic \\
hidden-family axes & $\rhow^-, \Delta, R$ &
stated assumptions, never estimates \\
board-level selection & pair taken as fixed &
calibrated empirically only \\
\bottomrule
\end{tabular}
\end{center}
\end{table}

\emph{Regularity for the asymptotic corollary.} For each fixed $P$
the influence-function bound satisfies
$\max_{v \le K}\varepsilon_{v,n,P}(\eta) \to 0$ under: $G$ fixed,
$n_b/n \to \pi_b > 0$, bounded item outcomes, the variances
$V_v, V_q, \tau_v^2$ bounded away from zero, $|\beta_v| \le 1 -
\epsilon$, and i.i.d.\ items within independent benchmark strata. The statement is
pointwise in $P$; no uniformity over laws is claimed.

\emph{The empirical-Bernstein variant.} The finite-sample construction
bounds the five per-benchmark moments by empirical-Bernstein
concentration and combines them by interval arithmetic. With
$\eta = \gamma/2K$ split across six benchmarks the smallest
benchmark's moment pads reach $\approx 10\%$, the implied bound on
$\beta_W$ saturates at $1$, and every claim becomes
model-insufficient --- vacuous here, though this does not rule out
sharper finite-sample routes.

\emph{Effect on the audited board.} For the rank-1 claim the
fixed-pair bound $U_W(0.005) = 0.602$ becomes $U_W(0.005/27) = 0.615$,
and the certificate threshold moves from $\rhow^{*} = 0.689$ (plug-in)
to $0.756$ selection-valid, against $0.749$ for the fixed-pair
Berger--Boos version: accounting for selection in the correlation estimate
raises the required floor by $0.007$. The joint
$(\beta_W, \tau_W)$ version is more conservative, since worst-casing the
standard deviation lowers the effective margin from $z = 2.61$ to
$2.46$; it is the version that needs no plug-in studentization at all.
At $k = 27$ the required quantile is $1 - 0.005/27 = 0.99981$, which a
$B = 20{,}000$ bootstrap cannot resolve (about $3.7$ draws beyond it),
which is why the bound is analytic and the $10^5$-draw shared-row
stratified bootstrap below is its validation, not its source.

\begin{proposition}[Selection-valid nuisance bound]
\label{prop:selectedbeta}
Fix a data-generating law $P$, and suppose that for each \emph{fixed}
candidate $v$,
$\Pr_P\{\beta_v > U_{v,n}(\eta)\} \le \eta +
\varepsilon_{v,n,P}(\eta)$. Then for any selection rule $W$, possibly
depending on all candidate scores and the same items, and any
$k \le K$,
\[
\Pr_P\bigl\{\beta_W > U_{W,n}(\gamma/K)\bigr\}
\;\le\; \gamma + \sum_{v=1}^{k}\varepsilon_{v,n,P}(\gamma/K) :
\]
exact fixed-index bounds ($\varepsilon_{v,n,P} \equiv 0$) give an
exact selected bound; the influence-function bound gives
$\max_v \varepsilon_{v,n,P}(\eta) \to 0$ pointwise in $P$ under the
regularity above, with no uniformity over laws claimed. No
independence between candidates, or between selection and nuisance
estimation, is required.
\end{proposition}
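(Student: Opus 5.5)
The plan is a union bound over the fixed indices, the same device that makes Bonferroni valid under arbitrary dependence. The key observation is pointwise. Whatever index $W$ the selection rule returns, the event $\{\beta_W > U_{W,n}(\gamma/K)\}$ is contained in $\bigcup_{v=1}^{k}\{\beta_v > U_{v,n}(\gamma/K)\}$. On that event, the fixed index $v=W(\omega)$ satisfies $\beta_v > U_{v,n}(\gamma/K)$. Here $\beta_v$ is a fixed population parameter of candidate $v$ under $P$, and $U_{v,n}(\eta)$ is the bound computed from candidate $v$'s own data by the same rule used for any fixed candidate. Neither quantity depends on $W$. The dependence of $W$ on all candidate scores, and on the same items used to build the $U_{v,n}$, therefore enters only through which member of the union is realized. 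It never enters the individual events.

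Next, apply subadditivity and the fixed-candidate hypothesis at level $\eta=\gamma/K$:
\[
\Pr_P\{\beta_W > U_{W,n}(\gamma/K)\} \le \sum_{v=1}^{k}\Pr_P\{\beta_v > U_{v,n}(\gamma/K)\} \le \sum_{v=1}^{k}\Bigl(\frac{\gamma}{K}+\varepsilon_{v,n,P}(\gamma/K)\Bigr).
\]
Since $k\le K$, the first part sums to $k\gamma/K\le\gamma$, which gives the displayed bound. No joint law of the candidates is used anywhere, so no independence between candidates, or between selection and nuisance estimation, is needed. Using $K$ rather than the true $k$ in the level is what lets the auditor fix one budget and cover every smaller hidden family.

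The two consequences then follow. In the exact case ($\varepsilon_{v,n,P}\equiv 0$) the remainder vanishes, so the selected bound holds at level $\gamma$ in finite samples. In the asymptotic case, with $K$ fixed, the remainder is at most $k\max_{v\le K}\varepsilon_{v,n,P}(\gamma/K)\le K\max_{v\le K}\varepsilon_{v,n,P}(\gamma/K)$. This tends to $0$ pointwise in $P$ because each fixed-index influence-function error does so under the stated regularity conditions (fixed $G$, stratum proportions converging, bounded outcomes, variances bounded away from zero, $|\beta_v|\le 1-\epsilon$). This is where $K$ being fixed matters. With $K$ growing, the level $\gamma/K$ would shrink and the finite sum could fail to vanish; this is why no uniformity over laws is claimed.

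I expect the only real obstacle to be justifying the pointwise fixed-index rate $\varepsilon_{v,n,P}(\eta)\to0$ for the Fisher-$z$ influence-function bound at a fixed small level $\eta$. I would take this from a standard delta-method/CLT argument applied to the stratified moment estimators, treating it as part of the hypothesis the proposition already grants. The selection step itself is elementary.
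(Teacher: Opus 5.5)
Your argument is correct and matches the paper's proof. The paper writes $\Pr_P\{\beta_W > U_{W,n}(\gamma/K)\}=\sum_{v\le k}\Pr_P\{W=v,\ \beta_v>U_{v,n}(\gamma/K)\}$ and drops $\{W=v\}$, which is your containment-plus-subadditivity step, and then applies the fixed-index hypothesis and $k\gamma/K\le\gamma$.

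One aside is misattributed. "No uniformity over laws" reflects that the influence-function remainder's rate depends on $P$, not growing $K$. The paper handles growing $K$ as a separate condition, $K_n\max_v\varepsilon_{v,n,P}(\gamma/K_n)\to0$.
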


\emph{Proof of Proposition~\ref{prop:selectedbeta}.} Fix $P$,
partition on the selected index, and use the fixed-index guarantee:
\begin{align*}
\Pr_P\bigl\{\beta_W > U_{W,n}(\gamma/K)\bigr\}
&= \sum_{v=1}^{k}
\Pr_P\bigl\{W = v,\ \beta_v > U_{v,n}(\gamma/K)\bigr\}\\
&\le \sum_{v=1}^{k}\Bigl[\frac{\gamma}{K}
+ \varepsilon_{v,n,P}\bigl(\tfrac{\gamma}{K}\bigr)\Bigr]\\
&\le \gamma
+ \sum_{v=1}^{k}\varepsilon_{v,n,P}\bigl(\tfrac{\gamma}{K}\bigr).
\end{align*}
Nothing about the law of $W$ enters, and the bounds may be arbitrarily
dependent. \qed

\emph{Sharpness of the $\gamma/k$ adjustment.} With disjoint failure
events $\{\beta_v > U_v\}$ and a rule that selects whichever
candidate's bound fails, the selected miss probability is exactly
$k\eta$; reducing the adjustment needs information a public auditor does
not have --- hidden losers' scores, selection gaps, or a prospective
holdout.

\emph{Finite reporting range.} Equation~(4) defines the ideal,
known-nuisance budget; it is not a claim that estimated nuisances permit
unrestricted inversion. All nuisance-adjusted curves in this paper use
$K_{\max}=100$ and recompute the caps at each integer $K\le100$.
We report $\widetilde{k}_{100}$ with $\ge100$ for a censored endpoint;
zero means no supported count in the reported range, and domain failure
is kept separate from non-rejection. The theorem applies for a fixed law
and fixed finite candidate bound; the finite reporting range does not
remove approximation error at the available sample size. In particular,
the $k=3$--$8$ family replays below do not validate coverage at $K=100$,
and reading several floors does not create a simultaneous confidence
statement over hidden-family assumptions.

\begin{table}[!htbp]
\centering
\caption{Finite-range nuisance-adjusted budgets for v1 rank 1, with
$K_{\max}=100$ and $(\Delta,R)=(0,1)$. Inputs and sampling assumptions
match Figure~\ref{fig:sensitivity}. The last row reaches the reporting
cap; it does not establish an unlimited budget.}
\label{tab:finite-budget}
\small
\begin{tabular}{ccc}
\toprule
Assumed floor & selected-$\beta$ budget & joint budget \\
\midrule
$0.42$ & 11 & 7 \\
$0.56$ & 13 & 8 \\
$0.707$ & 20 & 11 \\
$0.86$ & 82 & 30 \\
$0.99$ & $\ge100$ & $\ge100$ \\
\bottomrule
\end{tabular}
\end{table}

\emph{Growing multiplicity.} For $K = K_n$ growing with $n$,
asymptotic validity of the selected bound additionally needs
$K_n \max_v \varepsilon_{v,n,P}(\gamma/K_n) \to 0$; this is the
condition under which extending the sensitivity curve to large $K$ is
licensed.

\emph{Level-$\alpha$ validity of \eqref{eq:svbb}.} For true-standardized
$z$, fix $P$ and let
$E = \{\beta_W \le U_W(\gamma/K)\}$, so
$\Pr_P(E^c) \le \gamma + \sum_{v\le k}\varepsilon_{v,n,P}(\gamma/K)$
by the transfer theorem (at most $\gamma$ when the fixed-index bounds
are exact). On
$E$, the heterogeneity adjustment gives
$\max_v \beta_v \le \beta^+_K$, and the tail is nondecreasing in the
comparator cap (Proposition~\ref{prop:rhobmono}, extended to $\beta^+$
by Lemma~\ref{lem:robust}), so the tail evaluated at $\beta^+_K$ is at
least the true known-nuisance $p$-value. Hence under $\Hprov$
\[
\Pr_P\bigl\{p^{\mathrm{SVBB}}_K \le \alpha\bigr\}
\;\le\; \Pr_P(E^c) + \Pr_P\bigl\{p_{\mathrm{true}} \le \alpha - \gamma
\bigr\} \;\le\; \alpha +
\sum_{v\le k}\varepsilon_{v,n,P}(\gamma/K) ,
\]
at most $\alpha$ when the fixed-index bounds are exact. No
independence between the statistic, the selection rule and the nuisance
estimator is used. For the joint rectangle each side spends
$\gamma/2K$, so the remainder becomes
$\sum_{v\le k}[\varepsilon^{\beta}_{v,n,P}(\gamma/2K)
+ \varepsilon^{\tau}_{v,n,P}(\gamma/2K)]$.

\emph{Fixed-pair bound from the influence function.} With equal
benchmark weights $w_b = 1/G$ write $a_b = w_b^2/n_b$,
$\Gamma = \sum_b a_b \operatorname{Cov}_b(X_v, X_q)$,
$V_v = \sum_b a_b \operatorname{Var}_b(X_v)$ and likewise $V_q$, so
that $\beta_v = \Gamma/\sqrt{V_v V_q}$ and
$\tau_v^2 = V_v + V_q - 2\Gamma$. Writing
$\phi_{c,b} = (x-\mu_{vb})(y-\mu_{qb}) - \operatorname{Cov}_b$,
$\phi_{v,b} = (x-\mu_{vb})^2 - \operatorname{Var}_b(X_v)$ and
$\phi_{q,b}$ likewise, the benchmark-$b$ influence contributions are
\[
\psi_{\beta,b} = a_b\Bigl[\frac{\phi_{c,b}}{\sqrt{V_vV_q}}
- \frac{\beta_v}{2V_v}\phi_{v,b}
- \frac{\beta_v}{2V_q}\phi_{q,b}\Bigr],
\qquad
\psi_{\log\tau,b} = \frac{a_b}{2\tau_v^2}
\bigl[\phi_{v,b} + \phi_{q,b} - 2\phi_{c,b}\bigr],
\]
giving $\widehat{\operatorname{Var}}(\hat\beta_v)
= \sum_b \widehat{\operatorname{Var}}_b(\hat\psi_{\beta,b})/n_b$ and
the Fisher-$z$ bound
$U_v(\eta) = \tanh[\operatorname{atanh}(\hat\beta_v)
+ z_{1-\eta}\,\widehat{\operatorname{se}}(\hat\beta_v)/(1-\hat\beta_v^2)]$.
The formula can be evaluated at any $K$, but numerical evaluability
does not establish statistical validity. At our reporting cap $K=100$,
the selected-$\beta$ and each joint endpoint use tail probabilities
$5\times10^{-5}$ and $2.5\times10^{-5}$, respectively. These rely on
an extreme-tail normal approximation; the validation below is local
evidence, not a uniform coverage guarantee over that range. On the rank-1 pair, a $10^5$-draw item
bootstrap (resampling rows, since the outcomes are not all binary ---
TruthfulQA's mc2 is continuous, so a $2\times2$ cell reduction would be
wrong) gives Fisher-scale SE $0.0206$ against the influence function's
$0.0206$, a ratio of $1.003$, with skewness $0.035$ and excess kurtosis
$0.022$: the Fisher-$z$ scale is where this statistic is normal, which
is exactly the extrapolation the bound needs. The two bounds agree
at quantiles estimable from the bootstrap, $0.6024$ vs $0.6017$ at $99.5\%$ and
$0.6162$ vs $0.6145$ at the selection-valid $1 - 0.005/27$ --- the
latter resting on $18$ draws, which is why we do not rely on it.

\emph{Joint inference for $(\beta_W, \tau_W)$.} Apply the same union bound to a fixed-pair region
$C_v(\eta)$ for $(\beta_v, \tau_v)$; then $C_W(\gamma/K)$ covers the
selected pair with probability at least $1-\gamma$ when the fixed-pair
regions are exact, with the Proposition~\ref{prop:selectedbeta} remainder
otherwise. Using the rectangle
$\{\beta_W \le U_\beta,\ \tau_W \le U_\tau\}$ (each side at
$\gamma/2K$) and the \emph{observed raw margin}
$m_{\mathrm{obs}} = S_{pW} - S_q$, the tail is increasing in both
coordinates --- larger $\tau$ lowers $m_{\mathrm{obs}}/\tau$, larger
$\beta$ raises the tail --- so
\[
p^{\text{joint-SVBB}}_K \;=\;
\min\Bigl\{1,\;
\gamma + Q_K\Bigl(\frac{m_{\mathrm{obs}}}{U_\tau};\;
r^-\bigl(R;\rhow^-, U_\beta + \Delta\bigr)\Bigr)\Bigr\}.
\]
with $\widehat{\operatorname{Var}}(\widehat{\log\tau})
= \sum_b \widehat{\operatorname{Var}}_b(\hat\psi_{\log\tau,b})/n_b$
and $U_\tau(\eta) = \hat\tau\exp[z_{1-\eta}\,
\widehat{\operatorname{se}}(\widehat{\log\tau})]$. The display uses
$m_{\mathrm{obs}} \ge 0$, which adjacent-rank margins satisfy by
construction; for a negative margin the tail moves the other way in
$\tau$ and the region's lower endpoint would be the relevant one. No
statistic is standardized by an estimate here: the margin enters raw and
the standard deviation is worst-cased inside the region, which removes
the plug-in studentization Proposition~\ref{prop:shipped} had to assume
away. In the implementation $m_{\mathrm{obs}}$ is the composite-score
difference itself, never $z \times \hat\tau$: reconstructing it that way
would silently import whichever variance convention produced the
external $z$, and the joint statistic is supposed to depend on no such
convention at all. The check is cheap and worth running --- on every
audited pair $m_{\mathrm{obs}}/\hat\tau$ reproduces the published $z$ to
machine precision, which is what says the two pipelines agree. Whether the result is finite-sample or asymptotic is inherited
from the fixed-pair region, exactly as Proposition~\ref{prop:selectedbeta}
says --- a theorem end to end with the concentration region, and a
theorem up to the per-law remainders $\varepsilon_{v,n,P}$ with the
influence-function one.

\emph{Does the correction change coverage in practice?} The theory says a
fixed-pair bound need not cover a selected coordinate; whether it
actually fails is an empirical question, and on fully observed families
we can replay the selection to find out. Treating each curated family's
item matrix as the generating law and its plug-in $\beta_v$ as
population values, we resample items (shared indices across columns),
re-select $W^\ast = \arg\max_v S_v^\ast$ as a provider would, build both
bounds \emph{from the selected pair alone}, and record whether each
covers $\beta_{W^\ast}$. Against a nominal $99.5\%$, the ordinary
fixed-pair bound lands at $99.32$--$99.5\%$ (median $99.45\%$; below
nominal in $7$ of $12$ families, though no single family's shortfall is
resolved under simultaneous Clopper--Pearson bands at $B = 4{,}000$),
while the selection-valid bound covers $99.75$--$99.9\%$ in all twelve,
and the joint rectangle --- which must cover $\beta_{W^\ast}$ and
$\tau_{W^\ast}$ simultaneously, each side spending $\gamma/2k$ ---
covers $99.80$--$99.9\%$, also in all twelve. The measured leak is small
--- these families have $k = 3$--$8$, far from the $k = 27$ the theory
worst-cases --- but it is in the predicted direction, and the correction
removes it with a quantified adjustment.

\emph{When the operator can see everything.} A leaderboard operator
holding all $k$ candidate matrices can replace the union bound by a
joint max-$t$ band: bootstrap all candidates on shared item indices,
take the quantile of $\max_v (\theta_v - \hat\theta_v)/\hat s_v$, and
use $\tanh(\hat\theta_v + c_{1-\gamma}\hat s_v)$. Sibling nuisance
estimators are strongly positively correlated, so that band is
typically narrower than $\gamma/k$. The tool exposes both modes ---
public retrospective and operator full-disclosure --- because they
differ only in what data the auditor is given.

\subsection[Clone boundary: sanity check]{Clone boundary ($\rhow \to 1$): sanity check}

At $\rhow = 1$ the idiosyncratic terms vanish and
$S_{pv} = \sqrt{\rhob}\,Z_0 + \sqrt{1-\rhob}\,F_p + \mu_{pv}$: variants
differ by deterministic mean shifts only. Then
$M = \max_v \delta_v + G$ with $G \sim N(0, 2-2\rhob)$: the law of the
observed margin no longer depends on $k$ (hidden multiplicity is exactly
unidentifiable), and under $\Hprov$ ($\max_v \delta_v \le 0$) its null tail
is bounded by the $k=1$ law (hidden multiplicity is exactly harmless).
Equation~\eqref{eq:tail} reproduces this continuously:
$c_\alpha(k;\rhow,\rhob) \to c_\alpha(1)$ as $\rhow \to 1$ (numerically,
$c_{.05}(27; 0.999999, 0.5) = 1.6467$ vs.\ $z_{.95} = 1.6449$).

\subsection[Shipped-model claim and proof]{Shipped-model claim (Remark~\ref{rem:monotone}) and proof}

\begin{remark}[Limits of submitted-model certification]
\label{rem:monotone}
Within the class of nonrandomized threshold tests that are monotone in the
observed margin $z$ \emph{alone}, size control under $\Hprov$ forces the
critical value above the least-favorable quantile, so for $k > \kbar$ no
test in this class certifies the submitted-model claim either.
Tests outside the class (using the runner-up's item-level profile, the board's
full covariance, or family-membership side information) are not covered.
\end{remark}

Let $T$ reject the null iff $z > c$, a nonrandomized monotone threshold test
at hidden multiplicity $k$. Size control under $\Hprov$ requires
$\sup_{\delta \le 0} \Pr_\delta(z > c) \le \alpha$; by
Lemma~\ref{lem:lfc} the supremum is attained at $\delta = 0$, so
$p_k(c) \le \alpha$, i.e., $c \ge c_\alpha(k)$. If $k > \kbar(z_{\mathrm{obs}})$
then by definition of $\kbar$, $p_k(z_{\mathrm{obs}}) \ge \alpha$, hence
$z_{\mathrm{obs}} \le c_\alpha(k) \le c$: $T$ does not reject. Since the
submitted-model claim implies the provider-level claim (if the submitted
variant beats $q$, some variant does), no test in this class certifies
either claim when $k > \kbar$. \qed

\section{The full numerical grid and quadrature validation}
\label{app:grid}

\begin{table}[!htbp]
\caption{Critical values (standardized margin scale, $\alpha=.05$,
$\rhob = 0.50$ throughout) at the documented $k=27$, and effective
multiplicity. Numbers from the quadrature \eqref{eq:tail}
(4001-point rule; refinement to 20001 points changes no entry by more than
$10^{-6}$, and $4{\times}10^6$-draw Monte Carlo agrees within 1.2 SE;
Appendix~\ref{app:grid}).}
\label{tab:neff}
\begin{center}
\begin{tabular}{lccccccc}
\toprule
 & naive & Dunnett & Bonf. & exact (.42) & exact (.56) & exact (.707) & exact (.86) \\
\midrule
$c_{.05}$ & 1.645 & 2.725 & 2.902 & 2.77 & 2.68 & 2.54 & 2.31 \\
$\neff(27)$ & 1 & --- & 27 & 18.5 & 14.0 & 9.2 & 4.8 \\
\bottomrule
\end{tabular}
\end{center}
\end{table}

\textbf{Quadrature.} The tail \eqref{eq:tail} is computed by a 4001-point
trapezoid rule on $[-8.5, 8.5]$ against the Gaussian weight. Refinement to
a 20001-point rule on $[-10, 10]$ changes no reported cell by more than
$10^{-6}$. Monte Carlo validation across both runs: within
$2.1$ SE at $2 \times 10^6$ draws and within $1.2$ SE at
$4 \times 10^6$ draws in every validation cell, including the near-clone
corner --- consistent with an exact rule and shrinking Monte-Carlo noise.
(``Dunnett'' throughout denotes the shared-comparator equicorrelation-1/2
limit $\rhow = \rhob$; its critical value is free of $\rhob$.) (Near the clone boundary the integrand approaches a step, so a
\emph{fixed low-order} Gauss--Hermite rule is inaccurate --- a
200-node rule errs by $7$ Monte-Carlo SEs at
$(z, k, \rhow, \rhob) = (2.5, 27, 0.99, 0.5)$ --- which is why we use
the fixed fine grid.) Clone-boundary check:
$c_{.05}(27; \rhow \to 1, 0.5) = 1.6467$ vs.\
$z_{.95} = 1.6449$.

\textbf{Selected grid (excerpt; $\alpha = .05$, $z$-scale critical
values).} Full machine-readable tables ship with the code release.

\begin{center}
\begin{tabular}{ccccccc}
\toprule
$k$ & $\rhob$ & indep & Bonf & Dunnett & exact $\rhow{=}.7$ & exact $\rhow{=}.9$ \\
\midrule
10  & .25 & 2.568 & 2.576 & 2.448 & 2.224 & 2.006 \\
10  & .50 & 2.568 & 2.576 & 2.448 & 2.322 & 2.077 \\
27  & .25 & 2.895 & 2.902 & 2.725 & 2.413 & 2.120 \\
27  & .50 & 2.895 & 2.902 & 2.725 & 2.547 & 2.215 \\
100 & .25 & 3.283 & 3.291 & 3.047 & 2.628 & 2.247 \\
100 & .50 & 3.283 & 3.291 & 3.047 & 2.806 & 2.370 \\
\bottomrule
\end{tabular}
\end{center}

\section{Measurement details}
\label{app:measurement}

\textbf{Comparator-heterogeneity calibration and the correlation
ordering (detail).} Lemma~\ref{lem:robust}'s
$\beta^+$ axis is calibrated on the same families: for each curated
family we compute the score-matched correlation $\beta_v$ of \emph{every}
member with the family's audit comparator. The spread
$\Delta_b = \max_v \beta_v - \min_v \beta_v$ has median $0.048$
(range $0.009$--$0.383$); the certificate-relevant excess
$\max_v \beta_v - \beta_W$ over the identified winner correlation has
median $0.010$ and maximum $0.121$, and in 4/12 families the winner
\emph{is} the most comparator-correlated member (excess $0$)
(\texttt{beta\_heterogeneity}); the controlled family shows the same scale
against external comparators ($\Delta_b$ $0.09$--$0.23$,
Appendix~\ref{app:dial}). \S\ref{sec:audit} incorporates these adjustments
into adjusted verdicts.

Together these measurements order family \emph{correlation at pooled
granularity}: base$\to$instruct pairs $0.44$--$0.65$ $<$ community
merges/LoRA $\approx 0.71$ $<$ version series $\approx 0.78$ $<$
controlled checkpoint series $\approx 0.90$. This ordering describes the observed public populations, but
\S\ref{sec:measurement} shows it does not transfer across score functionals --- the same
checkpoint family has correlation $0.90$ pooled and $0.556$ on its own audited
composite --- so we use it as context, never as a transport argument:
what enters any verdict is the score-matched floor stated as an
assumption.

\begin{figure}[!htbp]
\centering
\includegraphics[width=\linewidth]{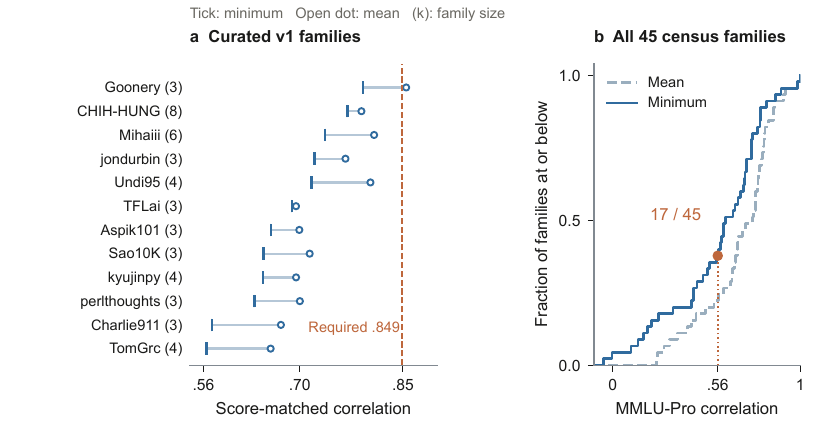}
\caption{\textbf{A descriptive mean is not a minimum-correlation guarantee.}
\textbf{(a)} Item-resampling score-matched minima and means for all twelve curated v1
families; parentheses give $k$, and full membership appears in
Appendix~\ref{app:measurement}. The orange line is the rank-1 joint
requirement at $k=27$. \textbf{(b)} Empirical cumulative distributions
of family means and minima among all 45 org-disjoint MMLU-Pro census
families; 17 minima fall below the $0.56$ sensitivity anchor.
These populations differ in tasks, construction rules, and family
sizes; their observations do not establish a hidden-family floor.
Controlled-family minima are shown separately in
Figure~\ref{fig:score-summary}c.}
\label{fig:family-evidence}
\end{figure}

\textbf{Census anchor detail (\S\ref{sec:measurement}).}
Restricting the census to $\ge$7B instruct bases leaves 6 families
(median mean $0.81$); their min-pairwise floor $0.42$ is a single
family's minimum pair, and dropping that family raises the restricted
floor to $0.705$, which would certify strictly more claims. Census
families are truncated at six members while curated families have
$k = 3$--$8$, so under any heterogeneity the observed minimum
mechanically falls as pairs are added --- small-family minima are
optimistic as large-family floors.

\textbf{Out-of-dataset and prompt-format detail (\S\ref{sec:measurement}).}
Correlated-Errors \citep{kim2025correlated}: v2-era models, 451 with
published accuracies; same-org same-base version series are 21 pairs
over 14 families. Holding the estimator tier as fixed as the data allow
(single-benchmark, pooled-item), the ordering remains: curated
merge/fine-tune families on MMLU alone give median $0.676$, the
census's submitter-declared families on MMLU-Pro $0.746$, and Kim's
version series on MMLU $0.778$ --- though this is one comparison, not a
de-confounding (\S\ref{sec:discussion}). PromptEval per-pair detail:
IQR widths $0.019$/$0.024$/$0.033$ for llama-3-8b/instruct ($0.646$),
mistral-7b-v0.1/instruct ($0.528$), and gemma-7b/it ($0.436$;
across-template range $0.27$--$0.48$): same-base correlations tend to
be higher on average, but the distributions overlap.

\textbf{Curation rules (deterministic).} A hand-curated family is the set
of matrix models satisfying all of: (i) same organization prefix (the
string before \texttt{\_\_} in the leaderboard identifier); (ii) an
explicit shared variant-series token in the model name (a common base-model
or series stem, e.g.\ \texttt{llama-2-13b-FINETUNE*},
\texttt{Pallas-0.*}, \texttt{Chupacabra-7B-v*}); (iii) family size
$k \ge 3$. Rules were frozen before any correlation was computed; no family
was added or removed afterward.

\par\noindent\begin{minipage}{\linewidth}
\textbf{Families (score-matched estimates; mean / min pairwise
$\hat\rhow$).}
\begin{center}
\small
\begin{tabular}{lccc}
\toprule
family (org $\times$ series) & $k$ & mean & min \\
\midrule
Goonery Huginn-13b            & 3 & .855 & .792 \\
Mihaiii Pallas (Yi-34B)       & 6 & .808 & .737 \\
Undi95 L2-13B                 & 4 & .803 & .717 \\
CHIH-HUNG llama2-13b          & 8 & .790 & .769 \\
jondurbin airoboros-7b        & 3 & .767 & .721 \\
Sao10K L2-13B                 & 3 & .714 & .648 \\
perlthoughts Chupacabra-7B    & 3 & .700 & .634 \\
Aspik101 llama2-13b-PL        & 3 & .699 & .658 \\
kyujinpy PlatYi-34B           & 4 & .695 & .647 \\
TFLai Platypus2-13B-QLora     & 3 & .695 & .689 \\
Charlie911 vicuna-7b-lora     & 3 & .673 & .572 \\
TomGrc FusionNet (SOLAR)      & 4 & .658 & .565 \\
\midrule
median of means / min of mins &   & .707 & .565 \\
\bottomrule
\end{tabular}
\end{center}
\end{minipage}\par

Full membership lists, per-family bootstrap CIs, the
accuracy-matching protocol for $\hat\rhob$, and the
PromptEval per-template distributions are in the released JSONs
(\texttt{d2\_results}, \texttt{r2\_functional\_rho},
\texttt{w1\_*}).

\begin{figure}[!htbp]
\begin{center}
\includegraphics{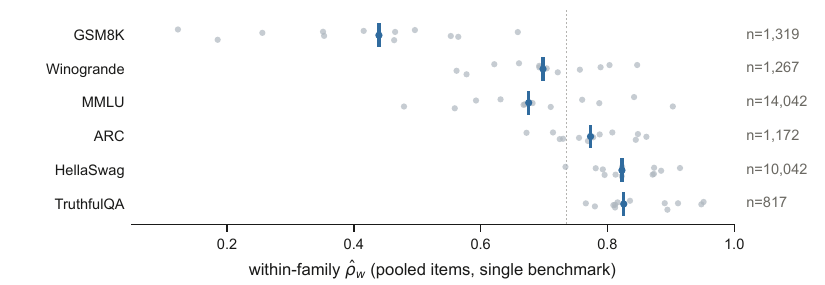}
\end{center}
\caption{Per-benchmark within-family $\hat\rhow$ (pooled items) across
the 12 curated families. Each gray dot is one family; blue tick and dot
mark the cross-family median. The benchmarks are ordered by median
$\hat\rhow$: GSM8K (median $.44$, $n = 1{,}319$ items) sits far below
MMLU ($.68$, $n = 14{,}042$). Because the equal-weight composite score's
sampling variance scales as $1/n_b$ per benchmark while a pooled-item
correlation weights by $n_b$, the composite behaves like the smallest
benchmark (GSM8K) while pooled estimates report the largest (MMLU) ---
the weighting mechanism described in \S\ref{sec:measurement}.}
\label{fig:bench_heterogeneity}
\end{figure}

\textbf{Ordering verification.} For v2, the audited ordering is official
by construction: models are ranked by the snapshot's
\texttt{average\_score} column, which is the leaderboard's published
average. For v1, the ordering is reconstructed from the per-item matrix as
the equal-weight mean of six benchmark accuracies; this matches the
official v1 formula: recomputing the official six-task average
from the official per-task metrics gives Kendall $\tau = 1.000$ over the
top 20 and 15/15 top-15 overlap with our reconstruction
(\texttt{v1\_ordering\_check.json}). What cannot be re-checked is the
retired page itself --- the v1 space was client-rendered, so web-archive
snapshots contain no table data; any residual discrepancy would have to
live in the display layer, not in the scores or the average the audit
uses.

\section{Subject-cluster sampling sensitivity}
\label{app:cluster}

The primary audit conditions on the benchmark design and resamples
individual items independently within each benchmark. As an alternative,
we treat MMLU's 57 subjects as exchangeable blocks and allow their
composition to vary. Other benchmarks retain shared-row item resampling.
This changes the sampling population; it is not a claim that subjects are
a random sample from all possible knowledge domains. It also leaves
unknown within-GSM8K template dependence and contamination unmodeled.

\textbf{Preserving the score and covariance.}
For subject $j$, let $n_j$ be its item count and $T_j$ the vector of
model-score sums. With $J=57$, $N=\sum_j n_j$, and
$\hat\mu=\sum_j T_j/N$, the cluster covariance of the original
item-weighted MMLU mean is
\[
\widehat C_{\mathrm{MMLU,cl}}
=\frac{J}{J-1}\frac{1}{N^2}
\sum_{j=1}^{J}(T_j-n_j\hat\mu)(T_j-n_j\hat\mu)^\top.
\]
We combine this with each other benchmark's item covariance divided by
its item count, using the squared original benchmark weights. The
bootstrap samples $J$ subjects with replacement, carrying all their
items, recomputes the ratio using the resampled total item count, and
recomputes the same cluster covariance. All models share the sampled
subjects and item rows. We never average subject means equally and never
center each subject separately: that would remove between-subject score
variation. For null replays we subtract each model's original
\emph{composite} mean from its bootstrapped score, preserving subject
contrasts before reselecting the winner.

\textbf{Controlled-family contrast.}
On the same five full-FT variants and 15{,}361 aligned items used in
Figure~\ref{fig:score-summary}, the pooled median is $0.894$.
The score-matched median is $0.459$ under item resampling but $0.921$
under subject resampling, with a percentile interval $[0.772,0.944]$
from $20{,}000$ shared resamples. The corresponding pairwise minima
are $0.417$ and $0.914$. MMLU's median share of composite variance rises
from $9.0\%$ to $86.7\%$ under subject sampling; common subject effects
now dominate the score covariance. The empirical bootstrap score
correlation ($0.918$ median) is close to the sandwich-based value. Thus the low composite correlation in the
headline example does \emph{not} survive allowing MMLU subject
composition to vary. The implication is that the correlation must match
both the score functional and the sampling population; pooling is not a
substitute for specifying either one.

\begin{table}[!htbp]
\centering
\caption{Sensitivity of the top three v1 claims to the MMLU sampling unit.
The observed margins and ranking are fixed. Floors refer only to the
Gaussian plug-in calculation at $k=27$, not the joint nuisance-adjusted
floor in Figure~\ref{fig:sensitivity}. A dash means even the Gaussian
single-test threshold is not passed.}
\label{tab:cluster}
\small
\begin{tabular}{llrrrr}
\toprule
Rank & MMLU sampling & $z$ & $\hat\rho_b$ & required floor & at $\rho_w^-=.56$ \\
\midrule
1 & items & 2.610 & .567 & .689 & not certified \\
1 & subjects & 2.532 & .759 & .861 & not certified \\
2 & items & 3.256 & .874 & .747 & model-insufficient \\
2 & subjects & 3.184 & .930 & .859 & model-insufficient \\
3 & items & 1.809 & .504 & .993 & not certified \\
3 & subjects & 1.610 & .703 & --- & not certified \\
\bottomrule
\end{tabular}
\end{table}

\textbf{Audit sensitivity.}
For rank 1, the cluster standard error is $0.004144$, compared with
$0.004020$ under item sampling; the empirical bootstrap SD is $0.004151$.
The comparator correlation changes much more than the standard error,
raising the plug-in required floor to $0.861$ (Table~\ref{tab:cluster}).
At $0.42$, the cluster-based floor is outside the margin domain
($0.42<2\times0.759-1$); at $0.56/0.71/0.86$ the claim is not
margin-certified. Rank 3 drops below the Gaussian naive threshold,
but its studentized bootstrap one-sided tail estimate is $0.0497$:
it is borderline and the two calibrations should not be conflated.
We do not transfer the item-level influence-function nuisance intervals
to 57 subject clusters; the cluster calculations are sampling-model
sensitivity diagnostics, not a new joint-coverage guarantee.

\textbf{Family replays and uncertainty.}
We also replay selection for all twelve curated families under this
sampling model ($B=20{,}000$ each), fixing the comparator and
recomputing the cluster-matched paired SE for the selected winner.
For reference we use the smallest correlation of the standardized
margin vector computed from the full-data cluster covariance; this
accommodates its observed covariance geometry, rather than importing
an item-based sibling floor. The Gaussian reference tail is at least
the empirical tail in $54/60$ threshold cells and in every cell of
$8/12$ families. None of the six pointwise exceptions is resolved using
two-sided Clopper--Pearson intervals simultaneous across all 60 cells.
These are empirical checks under the observed family configurations,
not a theorem for arbitrary subject dependence or hidden families.
The script \texttt{cluster\_bootstrap.py} includes checks that constant
but distinct subject blocks retain nonzero cluster uncertainty and that
singleton clusters recover the ordinary mean covariance. Results,
input hashes, and all cell-level intervals are in
\texttt{cluster\_bootstrap.json} (protocol \texttt{subject-ratio-v2}).

\section{Masking-validation details}
\label{app:masking}

\textbf{Comparator and board-level reselection replays.} The main-text
protocol fixes the comparator outside the loop, as in the model; only
the winner is re-selected. A variant that
\emph{also} re-selects the comparator as the best other-org model in
each replicate --- the protocol a real board implies --- is strictly
dominated in 12/12 families as well, by orders of magnitude (at
$B = 10^5$ the empirical tails are $\ge 300\times$ below the model
tail at every grid point): re-selecting the comparator can only lower
the winner's margin, so the fixed-comparator model tail is conservative
for it in the raw margin --- deterministically so, since the numerator
can only shrink. The studentized statistic is a different matter: a
different comparator also changes the denominator, so a smaller margin
can pair with a smaller SE. What we can say is that in this board's
replay the studentized tail fell too, which is calibration, not a
theorem. Going all the way --- replaying \emph{full board-level}
selection (winner re-selected over every model on the board, comparator
re-selected as the best other-org entry) under the global sharp null at
$B = 10^5$ --- drives the empirical tails to
$1.7\times10^{-3}$ at $z \ge 1.5$ and $0$ at $z \ge 2.5$: when both
sides of the margin are extreme order statistics of the same board,
margins shrink, and every certificate column dominates by orders of
magnitude. This result has two limitations. First, this is a
\emph{calibration result for the observed board's configuration under
the global sharp null}, not a uniform theorem over mean configurations:
the global sharp null is not least-favorable for board-level selection
(with two equal-mean leaders the selected margin is $|Z|$, doubling a
one-sided level, and crowding many equal means \emph{shrinks} top
spacings). Second, the per-claim certificate therefore remains
pair-conditional (\S\ref{sec:audit}); the replays calibrate, but do
not certify, the selection layers it conditions on. Item independence
is also a substantive sampling assumption; the separate subject-cluster
analysis in Appendix~\ref{app:cluster} changes both covariance and
standardization rather than reusing item-level error bars.

\textbf{Full-observability comparison.} As a second ground truth we
compare against the polyhedral conditional
test of \citet{lee2016exact} on each unmasked family.
The certificate, which sees only the
winner's margin, is more conservative than the full-observability
conditional $p$-value in 12/12 families, as it should be; under the
sharp null both tests reject at rates consistent with $\alpha = 0.05$
within Monte-Carlo error. In the small-benchmark calibration (4
families $\times$ six benchmarks, $B = 10^5$;
Figure~\ref{fig:calibration}) the min-pairwise column's worst cell is
within Monte-Carlo bands, attributable to the Gaussian approximation
and comparator-correlation heterogeneity at $n \approx 10^3$.

\textbf{Gaussian versus bootstrap tails on the audited composites.}
For each of the five audited top claims we center both correctness
columns per benchmark (sharp null) and draw $10^5$ stratified item
bootstraps of the studentized composite margin
(\texttt{gauss\_vs\_bootstrap}). The empirical upper tails track the
standard normal within Monte-Carlo resolution at every grid point ---
at $t = 2.5$, empirical $0.0061$--$0.0068$ against Gaussian $0.0062$
--- with $|\text{skew}| \le 0.013$ and excess kurtosis $\le 0.03$
across all five claims, including the two v2 composites whose variance
is dominated by generative benchmarks. The marginal Gaussian
approximation behind every audit column is not the binding assumption
at these sample sizes.

\begin{figure}[!htbp]
\centering
\includegraphics[width=\linewidth]{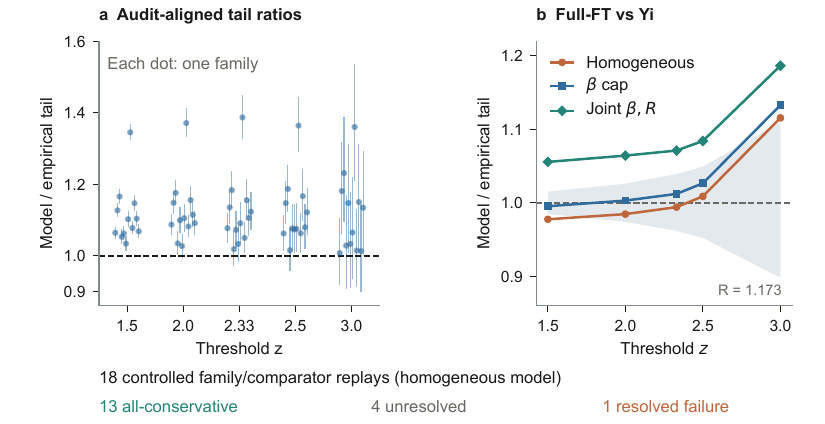}
\caption{\textbf{Calibration and its observed exception.}
\textbf{(a)} Minimum-based model/empirical tail ratios, twelve families
at each of five thresholds; small horizontal offsets separate families.
Vertical bars are the ratio bounds $q/U_{\rm emp}$ to $q/L_{\rm emp}$.
\textbf{(b)} Full-FT/Yi homogeneous, beta-cap, and joint beta/variance
predictions divided by the empirical tail. The controlled-replay
summary uses homogeneous predictions: 13 settings have all cells
resolved conservative, four have unresolved cells but no resolved
failure, and one has a resolved failure. Error bars and shading show
empirical Monte Carlo uncertainty ($B=10^5$), each direction
simultaneous at 95\% within a replay; they are not joint two-sided
95\% bands. The joint curve uses certified $r^-=0.56317237$ at
$(\rhow^-,\beta^+,R)=(0.417,0.23,1.173)$.}
\label{fig:validation}
\end{figure}

\textbf{Submission-history census.} The leaderboard's public request history
(10{,}019 submissions, 1{,}663 organizations) shows
multiplicity is the norm, not the exception: mean 6.0 submissions per
organization, 68\% submitted at least two models, 3.2\% at
least 27 --- the documented private $k=27$ sits at the 97th percentile of
even the \emph{public} per-organization counts. Submission metadata nominally includes base-model and deletion status,
but neither sharpens the census: submitter-declared \texttt{base\_model}
is non-empty in only 16.2\% of the history (\S\ref{sec:audit}), and the
history is append-only in practice --- 2 files in a 4{,}000-file sample
have \texttt{DELETED} status --- so organization-level public counts
remain descriptive counts, not lower bounds on the candidate set
behind any particular submitted model.

\begin{figure}[!htbp]
\begin{center}
\includegraphics{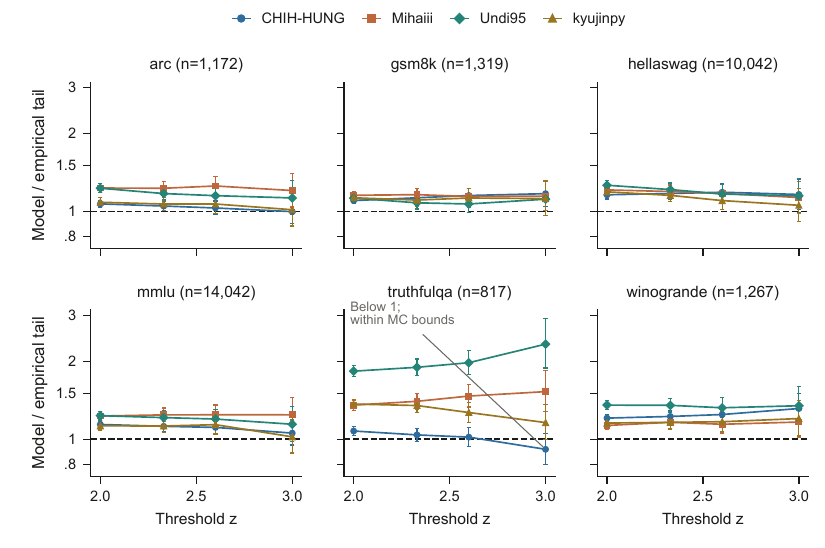}
\end{center}
\caption{Small-benchmark calibration at the min-pairwise
(Lemma-2-licensed) model column. For each benchmark and each of four
families, all $k{+}1$ correctness columns are centered (sharp null) and
the one-shot selection is replayed by item bootstrap within that
benchmark alone ($B = 10^5$); curves show the model/empirical
tail ratio. One of 96 point estimates is below one (ratio $0.915$),
but remains within Monte Carlo uncertainty; no min-column cell is
resolved anti-conservative. Bars transform empirical Clopper--Pearson
bounds as $q/U_{\rm emp}$ to $q/L_{\rm emp}$, with each direction
simultaneous at 95\% across all 96 cells. The mean-$\rhow$ column,
not shown, has four resolved failures. These are not joint two-sided
95\% intervals (Appendix~\ref{app:masking}).}
\label{fig:calibration}
\end{figure}

\textbf{Calibration detail with Monte-Carlo bands (per
Appendix~\ref{app:masking}).} At $B = 10^5$, pointwise
$p_{\mathrm{model}} \ge p_{\mathrm{emp}}$ holds at every grid $z$ in
12/12 families at both the family's own $\hat\rhow$ and the floor;
minimum per-cell ratio $1.013$. Pointwise domination alone does not
establish domination of the \emph{true} tails. For the aggregate
counts below, Clopper--Pearson bounds are Bonferroni-simultaneous
in each direction at level $0.05$ over the entire reported analysis
(60 or 96 cells), rather than separately within each family curve.
Figure~\ref{fig:calibration} displays the 96-cell bounds; the close-ups
in Figures~\ref{fig:validation} and~\ref{fig:validation-support} use
their explicitly stated per-replay scope.
The aggregate results (\texttt{mc\_calibration\_bands}) are: pooled at own-$\hat\rhow$
29/60 cells resolve conservative, 31 consistent, 0 resolve
anti-conservative; pooled at the floor 52/60/0; audit-aligned composite
(min column) 39/60/0; single-benchmark min column 78/96 resolve
conservative, 18 consistent, 0 resolve anti-conservative --- while the
mean-$\rhow$ column has 4/96 pointwise failures, \emph{all four
statistically resolved} (the $k{=}8$ family on TruthfulQA; min-column
worst cell ratio $0.915$ at $z=3$ stays within bands). The initial
$B = 2{,}000$ run's apparent failures (6/12 strict, worst ratio $0.58$
on 12 exceedance events) disappear at $50\times$ resolution.
Per-family ratio tables and the full-observability comparison
(certificate vs.\ polyhedral-conditional vs.\ winner-vs-pool-runner-up
$p$-values, and null rejection rates at 4{,}000 replications) ship
in \texttt{d3\_results\_B1e5}, \texttt{w2\_results\_B1e5},
\texttt{mc\_calibration\_bands}, and \texttt{w3\_results}.

\textbf{Census pipeline disclosure.} The org-disjoint census is the
45 \emph{largest} qualifying families (a download budget, not a
population); each family was truncated to its first six members
in metadata order --- fewer members means fewer pairs, so the
min-pairwise statistics reported are, if anything, \emph{optimistic}
relative to the full families; and 22 of 209 attempted
members (10.5\%) were dropped for missing or unscorable detail
repos, an attrition that is not random with respect to organization.
Public submission counts lower-bound organization-level public
probing wherever histories are used --- not any specific claim's
sibling-pool $k$ (Appendix~\ref{app:masking}).

\textbf{Bootstrap protocols.} All resampling is a nonparametric item
bootstrap, stratified so that each benchmark contributes exactly its own
$n_b$ items per replicate (and, where families are pooled, stratified by
family as well); correctness columns are centered to equal means before
resampling wherever a sharp null is imposed. Confidence intervals are
percentile intervals at 95\% ($B = 400$) for the family-level
$\hat\rhow$ and $\hat\rhob$ CIs of \S\ref{sec:measurement}. The
\emph{fixed-pair diagnostic} Berger--Boos adjustment uses one-sided $99.5\%$
Fisher-$z$ upper bounds ($B = 20{,}000$), matching the budget
$\alpha_\rho = 0.005$:
Proposition~\ref{prop:rhobmono} puts the supremum at the upper
endpoint, so a lower endpoint is never used, and a one-sided bound
leaves $\approx 100$ rather than $\approx 50$ draws beyond the quantile
--- the Fisher-$z$ scale keeps it inside $[-1,1]$ near the clone
corner. The \emph{primary} selection-valid and joint adjustments need the
$1 - \gamma/K$ and $1 - \gamma/2K$ quantiles, which that resolution
cannot see; they are built analytically from the influence function and
validated against the $10^5$-draw bootstrap of Appendix~\ref{app:svbb}. Selection nulls use $B = 10^5$ replicates (pooled and
single-benchmark masking, controlled-family coverage, comparator
reselection) except the full-observability null-rate comparison
($4{,}000$ replicates per family, Monte-Carlo SE $\approx 0.003$).

The controlled family's regression on training factors needs a separate protocol, because
its 190 outcomes form a complete dyadic graph rather than a sample:
each replicate resamples items within benchmark, applies the
\emph{same} indices to all 20 correctness columns, recomputes the full
$20\times20$ correlation matrix and refits ($B = 2{,}000$), so
shared-variant and shared-item dependence both remain in the
interval. Resampling dyads, or resampling each pair's items separately,
would destroy exactly the dependence that has to be preserved.
Seeds are fixed in the released scripts.

\section{Controlled families under item resampling}
\label{app:dial}

\textbf{Design.} All observational families are found; this one is made,
so its generating process --- the closest analogue of a provider's private
checkpoint series --- is fully known. We train 20 QLoRA variants
of a single base model (Qwen2.5-7B) with pre-specified factors: four
disjoint instruction-data mixtures (hash-partitioned shards of
tulu-v2-sft-mixture), LoRA rank $\in \{8, 64\}$, training steps
$\in \{500, 2000\}$, and seed replicates; every variant plus the base is
then evaluated per-item on MMLU and GSM8K (15{,}361 aligned
items).

\textbf{Controlled-family correlations.} Pairwise $\hat\rhow$ across the
190 variant pairs: $0.847$--$0.939$, median
$0.896$, higher than the pooled correlations in the observational
groups in \S\ref{sec:measurement}.

\textbf{Training-factor analysis.} Regressing pairwise $\hat\rhow$ on differences in training settings
over the 190 pairs. \emph{The 190 dyads are not 190 independent
observations}: each variant occurs in 19 of them and every correlation is
estimated from the same aligned items, so i.i.d.\ OLS standard errors
are not usable and we do not report them. We keep the OLS coefficients
as finite-network projections and attach three uncertainties, each
answering a different question --- resampling items, dropping a
variant, and relabelling the training settings:

\begin{center}
\small
\begin{tabular}{lcccc}
\toprule
factor & $\hat\beta$ & item-bootstrap 95\% CI & leave-one-variant
range & perm.\ $p$ \\
\midrule
intercept             & $+0.9035$ & $[+0.899, +0.908]$ & $[+0.901, +0.907]$ & --- \\
same data mixture     & $+0.0100$ & $[+0.0088, +0.0113]$ & $[+0.0088, +0.0127]$ & $.0005$ \\
seed-replicate only   & $-0.0108$ & $[-0.0126, -0.0088]$ & $[-0.0130, -0.0084]$ & $.029$ \\
different LoRA rank   & $-0.0218$ & $[-0.0236, -0.0200]$ & $[-0.0252, -0.0194]$ & $.0005$ \\
$\Delta$steps (/1500) & $+0.0030$ & $[+0.0010, +0.0051]$ & $[+0.0013, +0.0045]$ & $.573$ \\
\bottomrule
\end{tabular}
\end{center}

The bootstrap resamples items within benchmark and applies the
\emph{same} indices to all 20 correctness columns, recomputes the full
$20 \times 20$ correlation matrix and refits ($B = 2{,}000$); this
preserves both the shared-variant and the shared-item dependence that
an i.i.d.\ fit ignores. Its intervals are narrow \emph{by construction}: it holds the 20
trained variants fixed and asks only about item-sampling precision, so
it must not be read as a statement about model-level significance. That
question belongs to the other two columns --- the leave-one-variant
range, and an MRQAP permutation test with double semi-partialling
($2{,}000$ node relabellings of the residual matrix, which moves labels
without disturbing the dyadic dependence), which has limited power with
only $20$ nodes. Read together: sharing training data is associated
with higher correlation and mismatched adapter capacity with lower,
with permutation $p=.0005$ for each association. The seed
term differs: pairs differing \emph{only} by seed have correlations
$0.011$ \emph{below} what the additive model predicts for
same-mixture pairs, and that shows up in every column ($p = .029$,
though not after Bonferroni correction across the four factors).
The step-count coefficient has an item-bootstrap interval above zero,
but the permutation test does not reject ($p=.573$; Figure~\ref{fig:c6strata}). A dyadic cluster-robust sandwich
\citep{aronow2015cluster} --- which allows shared-endpoint dependence
but still treats disjoint dyads as independent, and so does not cover
the shared-item layer --- is reported alongside in
\texttt{dyadic\_inference} for comparison. Nothing here is a
randomization statement: training settings were pre-specified, not
randomized, so these are associations within one controlled family.

\begin{figure}[!htbp]
\begin{center}
\includegraphics{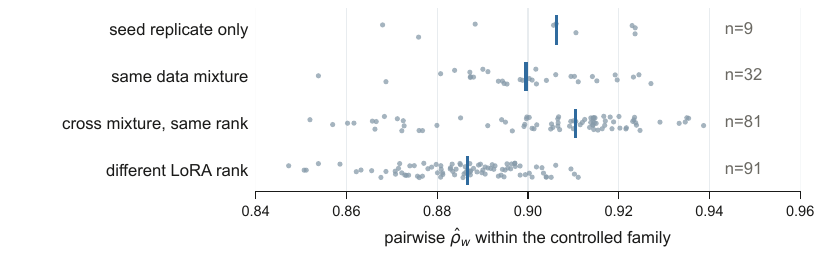}
\end{center}
\caption{The 190 pairwise correlations of the controlled family
by training-factor stratum (gray dots; blue tick = stratum median).
Pooled correlations range from approximately $0.85$ to $0.94$ across all strata.}
\label{fig:c6strata}
\end{figure}

\textbf{When variance ratios affect coverage.} A second
controlled family --- 10 variants of Qwen2.5-1.5B, five full
fine-tuned and five LoRA, same shards, steps and seeds --- reproduces
the difference between pooled and score-matched correlations (pooled $\hat\rhow$ median $0.89$/$0.85$ for the
full-FT and LoRA strata, composite $0.46$/$0.42$) and, unlike the 7B
family, exposes a case where $\beta^+$ alone does not give pointwise
domination over the saved empirical grid. Against an
external comparator (Yi-1.5-9B-Chat) the full-FT stratum's empirical
selection null crosses the homogeneous tail, and at $B = 10^5$ the
crossing is \emph{statistically resolved}: the $z = 1.5$ cell exceeds
it outside simultaneous Clopper--Pearson bands ($0.1892$ against a
homogeneous $0.1850$). The $\beta^+$-robust tail is never resolved
anti-conservative --- it dominates pointwise at four of the five grid
points and is grazed within bands at the fifth. The mechanism is the one
Lemma~\ref{lem:robust}'s second axis anticipates: small-model
fine-tuning moves GSM8K accuracy from $0.63$ to $0.14$--$0.41$, so the
variants differ from the comparator in composite score SD, which the
equal-variance reading ignores. The ratio that matters is the
theorem's, $R = \max_v \max(\sigma_v/\sigma_q, \sigma_q/\sigma_v)$,
computed against \emph{that} comparator: $R = 1.17$ for this stratum
against Yi-1.5-9B-Chat (the family reaches $1.34$ against its own base,
and $1.31$ for the LoRA stratum against zephyr --- the ratio is a
property of the pair, not of the family). Evaluating the \emph{joint}
$(\beta^+, R)$ column at that certified floor
($r^- \ge 0.563$, against $0.621$ under equal variances) restores
domination in all five cells.
The variance-aware construction restores pointwise domination in
this replay; the beta-only cell within the Monte Carlo bounds does
not itself establish a population-level failure. Operators should
report the variance axis when family members differ materially in
score precision.

\begin{figure}[!htb]
\centering
\includegraphics[width=\linewidth]{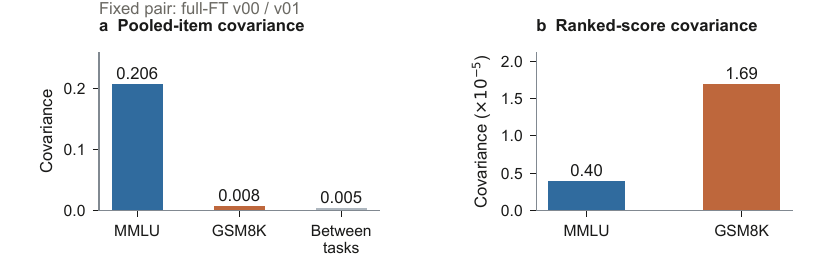}
\caption{\textbf{Exact covariance decomposition for the worked v00/v01 pair.}
\textbf{(a)} Pooled-item covariance is the sum of within-MMLU,
within-GSM8K and between-task mean components.
\textbf{(b)} Independent within-task resampling gives the composite
score covariance, with no between-task term. The units differ
between panels. Correlations require each covariance's own marginal
variance normalization; the task correlations cannot generally be
averaged with one common set of weights. All moments use the same
15{,}361 aligned items and the canonical ddof-zero convention.}
\label{fig:score-decomposition}
\end{figure}

\begin{figure}[!htb]
\centering
\includegraphics[width=\linewidth]{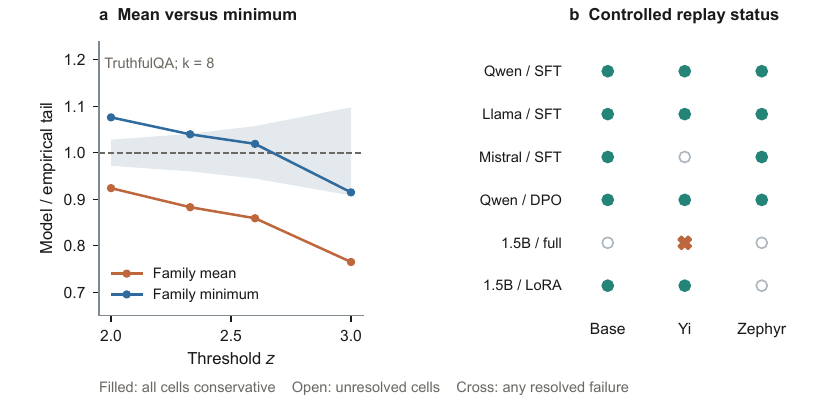}
\caption{\textbf{Why minima and unresolved validation cells matter.}
\textbf{(a)} TruthfulQA, CHIH-HUNG ($k=8$): the family mean gives four
resolved anti-conservative cells. The minimum has no resolved
anti-conservative cell, although at $z=3$ its point estimate lies
below the empirical tail. Gray: empirical Clopper--Pearson bounds,
each direction simultaneous at 95\% across these four thresholds.
\textbf{(b)} Homogeneous controlled replays, each over five thresholds:
filled circles mean every cell is resolved conservative; crosses
mean at least one resolved anti-conservative cell; open circles
mean otherwise. The latter are not demonstrated successes or
failures. Each replay uses $B=10^5$; shared families and comparators
are not independent studies.}
\label{fig:validation-support}
\end{figure}

\par
\begin{samepage}
\textbf{Accuracy and parsing controls.} Small-model fine-tuning reduces
GSM8K accuracy. We therefore check whether low-scoring variants or
unparseable answers account for the correlation difference using five
analyses of the 10-variant pool. \emph{(i) Scoring rule}: strict-match,
which also counts formatting mismatches as errors, lowers the \emph{composite}
correlation further ($0.252$ vs $0.294$) while leaving the pooled one
essentially unchanged ($0.858$ vs $0.846$) --- so the gap widens rather
than closes, and it is not a lenient-parser artifact.\par
\end{samepage}

\emph{(ii) Parse rates}: under
flexible-extract every variant still emits a readable answer on
$80$--$100\%$ of GSM8K items, so the decorrelation is in \emph{which}
items are right, not in whether anything parses. \emph{(iii) Best
variants only}: restricting to the top five by composite score raises
the median from $0.294$ to $0.426$ --- weak variants do contribute ---
but remains below half the same top-five subset's pooled $0.885$. \emph{(iv)
Accuracy-matched pairs}: among the $16$ pairs within $0.02$ composite
accuracy of each other, the median is $0.444$, so unequal marginal
success rates do not explain the gap either. \mbox{\emph{(v) Combined-pool replay}}: treating all ten variants as one candidate pool
($\rhow^- = 0.170$ on the composite) and replaying selection against
each comparator, the robust certificate dominates the empirical null in
every cell for all three. The interpretation supported by all five checks is the
structural one: the composite's variance is dominated by the small,
generative benchmark on which these siblings genuinely disagree.

\begin{figure}[!htbp]
\centering
\includegraphics[width=\linewidth]{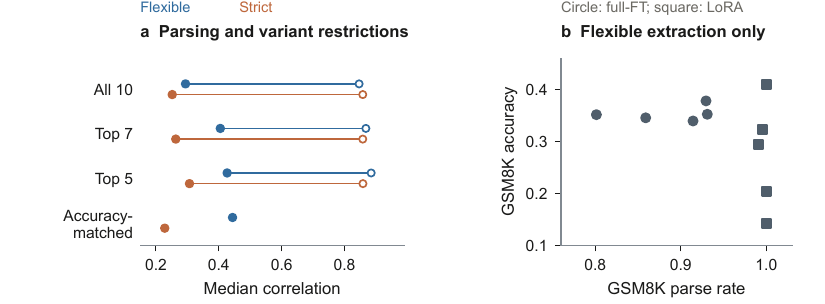}
\caption{\textbf{The score-matching gap remains after descriptive controls.}
\textbf{(a)} Medians among all ten Qwen2.5-1.5B variants, the top seven
or five by the corresponding composite score, and pairs within
$0.02$ composite accuracy. Filled points are score-matched; open
points are pooled. The accuracy-matched summaries have no saved
pooled estimate, so no endpoint is imputed. Top subsets and matched
pairs can differ between parsing rules; these are not causal parser
effects. \textbf{(b)} Per-variant GSM8K accuracy and parse rate, all under flexible
extraction. Neutral circles/squares distinguish full-FT/LoRA;
parser colors apply only to (a).
A high parse rate need not imply agreement about which answers are
correct. These are ten variants from one controlled setup, not a
representative sample of hidden frontier-model searches.}
\label{fig:controls}
\end{figure}

\textbf{External-comparator calibration.} The
base-comparator coverage check below runs in the model's
\emph{degenerate corner}: QLoRA moves the model so little that the
base-model comparator is as correlated with the variants as they are
with each other ($\hat\rhob \approx \hat\rhow \approx 0.90$ at
pooled-item granularity), exercising the shared-comparator limit. We
therefore evaluated two \emph{external, org-disjoint} comparators
(Yi-1.5-9B-Chat, zephyr-7b-beta) on the same MMLU+GSM8K items. On the
equal-weight two-task composite --- whose score-matched convention drops
the family's $\rhow$ to $\min 0.418$ / median $0.556$, because the
generative GSM8K task decorrelates the variants (another instance of
\S\ref{sec:measurement}'s estimator dependence) --- the external
comparators sit at $\beta$ mean $0.372$/$0.213$ (max $0.416$/$0.244$):
genuinely $\rhow > \rhob$. Replaying one-shot selection at
$B = 10^5$: the certificate tail dominates the empirical selection null
in all $15$ cells (three comparators including base), under both the
homogeneous column and the Lemma~\ref{lem:robust} robust column at
$\beta^+ = \max_v \beta_v$, with every cell \emph{statistically
resolved} conservative under simultaneous Clopper--Pearson bands
(\texttt{c6\_external\_comparators}, \texttt{mc\_calibration\_bands}).
Across the pre-specified training settings, pairwise correlations differ
by at most $\approx 0.09$ against an intercept of $0.90$.
Pooled item-error correlations remain high throughout this QLoRA family (the
seed-replicate stratum, $n = 9$ pairs, is a power-limited null).

\textbf{Certificate coverage with known ground truth.} Treating the 20
variants as the hidden pool ($k = 20$) and the base model as the
comparator, centering all columns and replaying one-shot selection by item
bootstrap ($B = 10^5$; an initial $B = 2{,}000$ run showed one
apparent sub-empirical cell at $z = 1.5$ that vanished at $50\times$ the
resolution --- Monte-Carlo noise, the same pattern as
Appendix~\ref{app:masking}): the certificate tail lies at or above the
empirical selection null at every tested margin (3 of 5 cells resolved
conservative under simultaneous binomial bands, none resolved
anti-conservative)
($z \in \{1.5, 2, 2.33, 2.6, 3\}$; e.g.\ $0.0559$ vs
$0.0475$ at $z = 2.6$) --- on a family whose generating process is
fully known, the certificate is pointwise conservative at the
simulated resolution; three of five cells are statistically resolved
conservative and none is resolved anti-conservative. (Here $\hat\rhob \approx \hat\rhow
\approx 0.90$: QLoRA moves the model little, so the base is
nearly as correlated with the variants as they are with each other ---
the near-clone case.)

\textbf{Across architectures and update rules.} The same protocol ---
same training-factor grid, same per-item MMLU$+$GSM8K evaluation, same
selection replay at $B = 10^5$ --- run on two further lineages and one
further learning rule (\texttt{c6x\_results}):

\begin{center}
\small
\begin{tabular}{llccccc}
\toprule
base & rule & $k$ & \multicolumn{2}{c}{score-matched $\hat\rhow$} &
pooled & robust replay \\
& & & min & median & median & (of 3 comparators) \\
\midrule
Llama-3.1-8B    & QLoRA SFT & 20 & .45 & .58 & .78 & dominates 3/3 \\
Mistral-7B-v0.3 & QLoRA SFT & 20 & .44 & .52 & .73 & dominates 3/3 \\
Qwen2.5-7B      & DPO       & 10 & .72 & .81 & .96 & dominates 3/3 \\
Qwen2.5-7B      & QLoRA SFT & 20 & .42 & .56 & .90 & dominates 3/3 \\
Qwen2.5-1.5B    & full FT   &  5 & .42 & .46 & .89 & 2/3 (the $R$ cell) \\
Qwen2.5-1.5B    & LoRA SFT  &  5 & .28 & .42 & .85 & dominates 3/3 \\
\bottomrule
\end{tabular}
\end{center}

Three observations apply to these six strata. \emph{First}, the pooled-vs-score-matched gap is in
every row: it is a property of the composite score functional, not of
any one lineage or recipe. \emph{Second}, the update rule moves the
correlations more than the architecture does: the two new SFT lineages have
score-matched medians of $0.52$--$0.58$, compared with $0.56$ for the original
Qwen family and $0.81$ for DPO on the same base. In this
controlled family, preference optimization preserves substantially more
of the base's error structure than any SFT stratum does, so a floor
calibrated on SFT families
is conservative for the measured DPO siblings, and an operator
auditing a mixed board should state which correlation case the floor describes.
\emph{Third}, coverage: of the $18$ family-stratum$\times$comparator
replays ($90$ grid cells), the only resolved anti-conservative cell
is the full-FT equal-variance cell examined above, and the
$\beta^+$-robust column dominates pointwise in $17$ of $18$ replays ---
accounting for unequal variances restores domination in the remaining replay.

\textbf{Extension-family training protocol.} The two SFT extension
lineages share the original training-factor grid (four hash-disjoint data shards
$\times$ adapter capacity $\times$ steps $\times$ seed); the DPO and
full-FT strata use separately pre-specified designs but share the same
per-item evaluation and selection-replay protocol (TRL training,
\texttt{lm-eval} per item on MMLU$+$GSM8K, bf16,
\texttt{--log\_samples}, the pinned flexible-extract filter). All $60$
scheduled variants trained, and all $60$ plus the $3$ base checkpoints
evaluated with none missing; the loader rejects any run missing either
task rather than silently analyzing the remainder:

\begin{center}
\small
\resizebox{\linewidth}{!}{%
\begin{tabular}{lllllc}
\toprule
base & data & update rule & steps & LR & MMLU; GSM8K \\
\midrule
Llama-3.1-8B & tulu-v2 shards & QLoRA SFT, $r{\in}\{8,64\}$ &
500/2000 & $10^{-4}$ & .62--.64; .50--.56 \\
Mistral-7B-v0.3 & tulu-v2 shards & QLoRA SFT, $r{\in}\{8,64\}$ &
500/2000 & $10^{-4}$ & .56--.59; .35--.41 \\
Qwen2.5-7B & ultrafeedback & QLoRA DPO, $r{=}8$, $\beta{=}.1$ &
500/1000 & $5{\times}10^{-5}$ & .719--.721; .84--.86 \\
Qwen2.5-1.5B & tulu-v2 shards & full FT / LoRA $r{=}64$ &
2000 & $2{\times}10^{-5}$/$10^{-4}$ & .59--.60; .14--.41 \\
\bottomrule
\end{tabular}
}
\end{center}

LoRA adapters use $\alpha = 2r$, dropout $0.05$, all-linear targets;
effective batch $16$ sequences, cosine schedule, warmup ratio $0.03$,
maximum length $1{,}024$ tokens. The DPO accuracy column is the
tightest in the table --- the update rule that moves accuracy least is
also the one that decorrelates least, the same phenomenon at a third
scale.

\Needspace{5\baselineskip}
\section[Worked example: auditing one claim with selective-evals]{Worked example: auditing one claim with \texttt{selective-evals}}
\label{app:toolexample}

One CLI invocation, with the claim's observed margin and measured
comparator correlation, returns the verbatim verdict sheet below; its
sensitivity boundaries are in Figure~\ref{fig:sensitivity}b.

Reading: the \texttt{exact} row is the plug-in verdict --- certified at
$k{=}27$ only under $\rhow \ge 0.69$ --- while \texttt{svbb} is the
selected-$\beta$ Berger--Boos $p$-value, already including the
$+\gamma$ spend, so it is compared to $\alpha = 0.05$: it clears only
at the top anchor, matching that adjustment's required floor of $0.756$
(the selected-beta threshold of Figure~\ref{fig:sensitivity}a). The joint adjustment
(floor $0.849$) needs the raw margin and the $\tau$ inputs, and ships
as \texttt{selective-evals joint-svbb}. The claim is \emph{not
margin-certified} at the two primary anchors $0.42/0.56$. The plug-in
budgets (13--17) exceed Bonferroni's (11), but are not joint-adjusted
budgets. The displayed $165$ is also a plug-in diagnostic: it does not
extend the $K\le100$ nuisance-adjusted reporting range.

\par\noindent\begin{minipage}{\linewidth}
{\small
\begin{verbatim}
$ selective-evals audit --z 2.61 --rho-b 0.567 --k 27 \
                        --rho-w 0.42 0.56 0.71 0.86 --se-rho-b 0.014
{
 "z": 2.61, "rho_b": 0.567, "k": 27, "alpha": 0.05,
 "naive": true, "independence": false,
 "bonferroni": false, "dunnett": false,
 "exact":      {"0.42": false, "0.56": false,
                "0.71": true,  "0.86": true},
 "svbb":       {"0.42": 0.0996, "0.56": 0.0796,
                "0.71": 0.0570, "0.86": 0.0345},
 "kbar_exact": {"0.42": 13, "0.56": 17,
                "0.71": 30, "0.86": 165},
 "kbar_bonferroni": 11
}
\end{verbatim}
}

\end{minipage}\par

The same audit runs from raw data via
\texttt{selective-evals audit -{}-winner-csv a.csv -{}-runner-csv b.csv
-{}-rho-w 0.56 0.71}, which computes $z$ and $\hat\rhob$ from two
aligned per-item correctness columns before rendering the verdict sheet
(the $\rhow$ grid is required --- floors are the caller's stated
assumptions, not tool defaults). Multi-benchmark input takes
\texttt{-{}-groups-csv} with per-item benchmark labels, which switches
$z$ and $\hat\rhob$ to the score-matched estimator of
\S\ref{sec:measurement} rather than naive pooling. No numerical entry
point substitutes a correlation outside the certificate domain: audit
commands return \emph{model-insufficient}, while mathematical primitives
raise \texttt{ModelDomainError}; values outside $[-1,1]$ are rejected as
malformed inputs.

\section{Arena, provenance, and metadata}
\label{app:audit-extra}

\textbf{Heterogeneity-adjusted verdicts.} Re-running the v1 board at
$\beta^+ = \hat\rhob + \Delta$ for $\Delta \in \{0.05, 0.10,
0.121\}$ (median, $2\times$-median, and maximum measured excess)
changes \emph{no} certification at the measured sensitivity anchors; the only
movement is $46/116/167$ of $788$ anchor-verdicts shifting from
\emph{refused} to \emph{model-insufficient}. The plug-in rank-1 budgets
tighten but hold ($11/13/18/66$ at $\Delta = 0.121$, vs.\
$13/17/30/165$ homogeneous), and both axes at their curated extremes
($\Delta = 0.121$, $R = 1.16$; the controlled families reach $R = 1.34$,
reported in Appendix~\ref{app:dial}, not imported into a board audit)
give M-I$/12/16/43$ --- the $0.42$ anchor exits the model's domain,
and wherever the model applies the budget never falls below
Bonferroni's $11$. The measured-extreme joint stress test ---
selected $\beta$, worst-cased standardization, and
$(\Delta, R) = (0.121, 1.16)$, using interval-certified
standardized-margin floors at the stated
$(\rhow^-, \beta^+, R)$ --- is the most conservative reported
scenario: the rank-1 required floor reaches $0.939$, every anchor
refuses ($p = 0.071$ even at the $0.86$ mean-band reference), and the
budgets fall to M-I$/6/8/13$, the inner ones equal to Bonferroni
evaluated at the same worst-cased margin ($\kbar = 6$). Because
$\Delta$ and $R$ bound nothing for a named hidden family, this is a
stress test at measured extremes, not an upper bound, and it is reported
alongside --- not instead of --- the primary adjustments. Wherever the model applies, the analysis is the audit protocol of \S\ref{sec:audit} unchanged.

\textbf{Arena (illustrative).} We fit Bradley--Terry to the public
135{,}632-battle release (52 models; ties as half-wins;
production pipeline not reproduced). Battle-bootstrap correlations of
same-provider score pairs are $-0.05$ to $-0.02$ over
five pairs ($B = 2{,}000$, SE $\approx 0.02$); the BT sum-to-zero
constraint alone induces $\approx -0.02$, so the constraint-adjusted
values are within two SEs of zero and $\rhow \gtrsim 0.07$ is
disfavored --- though the result remains partly structural (disjoint
battles). With those limits, Arena is consistent with $\rhow \approx 0$, where the exact correction approaches the
independence heuristic --- the case where each hidden variant requires
the largest correction. Adjacent top $z$-values are
$0.04$--$1.4$ except the rank-1 gap ($z = 2.47$; a
\emph{same-provider} preview/production duo) --- naive-certified,
refused at $k = 27$ near that corner --- and one mid-top gap
($z = 3.9$) that remains certified in every column. The experimental
Llama-4 variant of \citet{singh2025leaderboard}'s account sits
(uncertifiably, $z = 1.0$) inside the fitted top ten.

\textbf{Provenance and metadata.} ``The v2 board'' means the
4{,}576-model archived v2 snapshot frozen by
\citet{kim2025correlated}, whose \texttt{average\_score} column is the
official v2 average (the snapshot ordering is official by construction);
the v1 per-item matrix reproduces the official per-task metrics for all
top-20 models --- Kendall $\tau = 1.000$ against the recomputed
official six-task average; the retired page itself cannot be re-rendered
from web archives, which bounds any residual ordering discrepancy to the
display layer (Appendix~\ref{app:measurement}). No complete private
candidate logs are available in the audit inputs. Consequently, none of
the 394 v1 claims, or the v2/Arena illustrations, has verified
fixed-family applicability in this audit. Their numerical verdicts are
conditional scenarios, not endorsements of a verified development
protocol; deployment applicability remains \emph{metadata-insufficient}. We attempted to sharpen the default $k$ to per-base-model public
counts and found the metadata itself insufficient: submitter-declared
\texttt{base\_model} is non-empty in only 16.2\% of the v1
request history. Organization-level counts describe public submissions,
not hidden per-release search; the gap motivates the disclosure
reforms of \citet{singh2025leaderboard}.

\begin{figure}[!htbp]
\centering
\includegraphics[width=\linewidth]{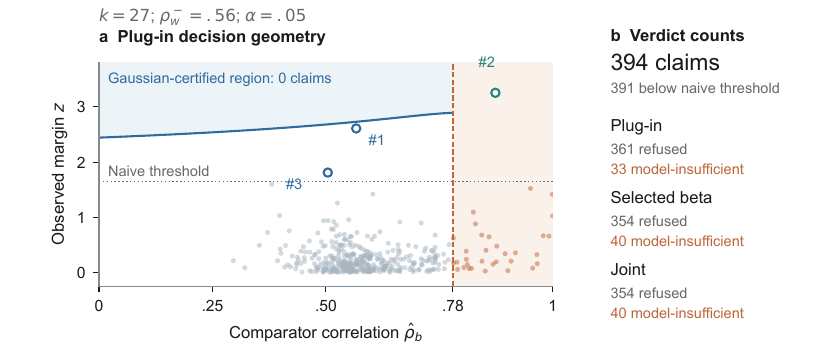}
\caption{\textbf{Most margins fail before selection is considered.}
All 394 overlapping v1 adjacent claims. Pale blue: Gaussian-certified
region; pale orange: outside the model domain. Claim \#2 (green) is
Bonferroni-supported despite model insufficiency. No claim passes
the shown Gaussian adjustments; (a) shows plug-in geometry only; (b)
reports each adjustment. Selected nuisance caps alter the applicability
check and change seven refusals into abstentions; the plotted orange
region represents the 33 plug-in model-insufficient claims.}
\label{fig:audit}
\end{figure}

\textbf{Sensitivity to the assumed multiplicity.} Recomputing every
verdict at $k \in \{5, 10, 15, 20, 27\}$, with the caps
$U_W(\gamma/k)$ and the joint $\gamma/2k$ quantiles recomputed per
$k$ (\texttt{k\_sensitivity}): the v1 board aggregate is essentially
$k$-invariant --- at the curated floor the (certified, refused,
model-insufficient) counts move only from $(1, 360, 33)$ at $k = 5$ to
$(0, 361, 33)$ at $k = 27$ on the plug-in adjustment, and analogously on the
others --- and the \emph{only} claim anywhere on the board that is
newly certified at any smaller $k$, any anchor, any adjustment, is the
rank-1 claim itself. Its profile at the curated floor: certified at
every adjustment for $k \le 5$, at the plug-in and selected-$\beta$ adjustments
for $k \le 10$, plug-in only at $k = 15$, refused everywhere from
$k = 20$. The audited v2 verdicts do not move at all ($\#1$
model-insufficient and $\#4$ certified at every $k$ in the grid). The
$k$-dependence is thus itself the diagnostic the curve reports: this
board's one contested claim is certifiable against a provider who
tried five variants and not against one who tried twenty.

\begin{figure}[!htbp]
\centering
\includegraphics[width=\linewidth]{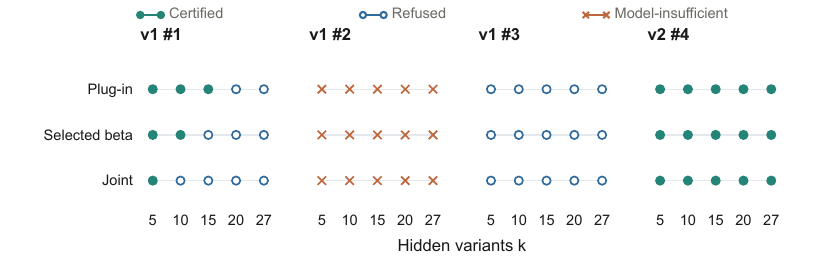}
\caption{\textbf{The same certificate yields four practical outcomes.}
Each panel uses $\rhow^-=0.56$, $\alpha=0.05$, the same five hidden
counts, and the same three nuisance treatments. Filled circles: certified; open circles: refused; crosses: model-insufficient. The nuisance caps are recomputed
at every $k$. Claim v1 \#2 remains supported by Bonferroni throughout
this grid even though these Gaussian columns are outside their
domain; v2 \#4 remains certified. The two boards use different
score rules and item sets, so these are case comparisons, not a
pooled success rate. Exact $k=27$ values are in Table~\ref{tab:audit}.}
\label{fig:audit-cases}
\end{figure}

\textbf{The v2 top-30 extension.} Extending the v2 audit to the
top-30 adjacent cross-provider claims (31 models' per-item details,
the same $21{,}606$ aligned items) certifies $5/30$ naively, $2/30$ by
Bonferroni, and $2/30$ at the curated floor at $k = 27$
(\texttt{e2\_v2\_results\_top30}). Where the densified board keeps a
top-15 pair unchanged ($6$ of $15$), $z$, $\hat\rhob$ and every
verdict are identical to the primary audit; the other nine claims
re-pair to nearer other-provider neighbours, shrinking their adjacent
margins --- the expected mechanics of a denser board, and the reason
the primary audit fixes its pairs before extending.

\textbf{Additional limitations.}
The 394 claims overlap, so board counts are descriptive.
The 4/12 pooled-vs-score-matched sign reversals
limit how far any single correlation ranking can generalize across
score functionals. Every $\rhow$ here is measured on public community
artifacts; we have no measurements from private frontier-lab checkpoint
families. The $(\beta^+, R)$ values are assumptions, not estimates: the
axes are unidentified for a hidden family --- public families
calibrate plausible values ($R$ reaches $1.34$ in the controlled families,
above the curated $1.16$) without bounding any of them; likewise the
plug-in guarantees hold for true margin SDs, covered formally only by
the joint selected-nuisance region and otherwise empirically
(Appendix~\ref{app:masking}).

\section{Extended related work}
\label{app:related}

\textbf{Leaderboard fragility and evaluation UQ.} A second attack
mode on the same platform, crowdsourced-vote manipulation
\citep{min2025rigging}, attacks the comparisons themselves rather than
the choice of which variant to submit, so no margin-based correction
can address it. \citet{messing2026hidden} attacks the
same problem from the variance side: standard intervals omit
variability from judge choice, decoding temperature and prompt design,
and a pipeline that fails to average over those sources leaves more
room for benchmark gaming --- quantified there as $56$ Elo at the
same documented $k = 27$, reducible to $32$ by pipeline design. That is
an \emph{ex ante} correction to the measurement pipeline; the certificate is the
\emph{ex post} correction to the claim. The two compound rather than
substitute: an interval shorter than the pipeline warrants makes hidden
selection require a larger correction, not a smaller one.

\textbf{Selective inference with full observability.} Locally
simultaneous inference restricts the simultaneity correction to the
questions the data could plausibly have raised \citep{zrnic2024locally};
the zoom correction \citep{zrnic2024zoom} inverts a ``zoom test'' that
spends error budget only on competitive candidates, recovering an
uncorrected interval when the winner's gap is large --- but requires
observing every candidate's score, the assumption our setting drops.
Conditional winner inference with general covariance is
\citet{bakshi2026flexible}; \citet{neufeld2026selective} review
methods for inference conditional on selection, unifying sample splitting, data
carving, and randomized CSI under a single information-splitting
framework. SIREN \citep{xu2026siren} corrects the
winner's curse prospectively by sample splitting; splitting is
impossible retrospectively, which is the setting of this audit. Where
candidates \emph{are} observed we import rank-verification machinery
directly \citep{hung2019rank,sood2025powerful}.

\textbf{Adaptivity and mechanism design.} The Ladder
\citep{blum2015ladder}, its randomized improvement
\citep{hardt2017climbing}, and the reusable holdout
\citep{dwork2015reusable} are ex-ante mechanism-design complements to
ex-post certification; \citet{roelofs2019meta} bounds the practical
stakes of sequential adaptive probing.
\citet{mccloskey2024critical} apply the same remedy in a different
literature --- raise the threshold until inference remains valid after
search the analyst never sees --- but put the unknown elsewhere: they model the searcher's
behavior and calibrate it, returning a threshold (in medical sciences,
roughly the classical value at $\alpha/5$), where we leave $k$ and
$\rhow^-$ as declared axes and return a curve. No behavioral
calibration recovers the correlation among searches, because it is a
property of the candidate family rather than of the searcher.

\end{document}